\title{
Enhanced Mean Field Game for Interactive Decision-Making with Varied Stylish Multi-Vehicles}
\author{Liancheng Zheng$^{1}$, 
        Zhen Tian$^{2,*}$, 
        Yangfan He$^{3}$, 
        Shuo Liu$^{4}$, 
        Huilin Chen$^{5}$, Fujiang Yuan$^{6,*}$ and Yanhong Peng$^{7,*}$%

\thanks{$^{1}$
School of Mechanical Engineering, Shandong Huayu University of Technology, Dezhou 253034, China}%
\thanks{$^{2}$James Watt School of Engineering, University of Glasgow, Glasgow G12 8QQ, United Kingdom.}%
\thanks{$^{3}$Computer science with the University of Minnesota - Twin Cities, Minneapolis, MN, USA.}%
\thanks{$^{4}$Boston University, Brookline, MA, USA}
\thanks{$^{5}$Faculty of Computer Science and Information Technology, University of Malaya, Kuala Lumpur, Malaysia}%
\thanks{$^{6}$School of Computer Science and Technology, Taiyuan Normal University, Jinzhong, 030619, Shanxi, China}%
\thanks{$^{7}$College of Mechanical Engineering, Chongqing University of Technology, Chongqing, 400054, China.}
\thanks{*Corresponding author.}%
}
\newtheorem{theorem}{Theorem}[section]
\newtheorem{lemma}[theorem]{Lemma}
\begin{document}

\maketitle

\begin{abstract}
This paper presents a MFG–based decision-making framework for autonomous driving in heterogeneous traffic. To model diverse human behaviors, we propose a quantitative driving style representation that maps abstract traits to parameters such as speed, safety factors, and reaction time. These are embedded into the MFG via a spatial influence field model. To ensure safe operation in dense traffic, we introduce a safety-critical lane-changing algorithm using dynamic safety margins, time-to-collision analysis, and multi-layered constraints. Real-world NGSIM data is used for style calibration and empirical validation. Experimental results show zero collisions across six style combinations, two 15-vehicle scenarios, and NGSIM-based trials, outperforming conventional game-theoretic baselines. Our approach offers scalable, interpretable, and behavior-aware planning for real-world autonomous driving applications.
\end{abstract}

\begin{IEEEkeywords}
Mean Field Games; Autonomous Driving; NGSIM Dataset; Multi-Agent Systems; Safety-Critical Control
\end{IEEEkeywords}

\section{Introduction}

\IEEEPARstart{I}n recent years, autonomous driving has advanced rapidly, with substantial progress in perception~\cite{lin2025slam2,lin2024dpl}, planning~\cite{yuan2025bio}, and control~\cite{tsai2024autonomous}. Nevertheless, interactive driving scenarios remain a major challenge, where safety is paramount. As illustrated in Figure1, an autonomous vehicle (AV) navigating among human-driven vehicles (HDVs) must continuously assess risks and adjust its trajectory to avoid collisions. The growing presence of AVs in heterogeneous traffic environments has further increased the complexity of interactions, requiring decision-making frameworks that can handle diverse human driving behaviors \cite{slade2024human,wang2024multimodal}. In dense traffic, such behaviors range from aggressive lane-changing to conservative following~\cite{zhu2025fdnet,chen2024safety}, creating significant variability that traditional uniform modeling approaches fail to capture~\cite{zhao2024potential}.

Most existing methods assume homogeneous agent behavior, neglecting the influence of behavioral diversity on traffic flow~\cite{lin2025safety,chen2024end,lin2024conflicts}. Classical game-theoretic models, though mathematically elegant, struggle with scalability in large-scale multi-agent scenarios and cannot fully represent the nuanced variations of human drivers~\cite{tan2021risk,wu2022humanlike}. Likewise, conventional control strategies~\cite{li2025adaptive,li2025efficient} often exhibit excessive conservatism in dense environments, leading to reduced efficiency and unsafe reactive decisions. Control Barrier Functions (CBFs) provide formal safety guarantees by preventing the system from entering unsafe states~\cite{lin2024enhanced,ames2019control}, such as avoiding dynamic obstacles~\cite{9483029}. However, CBFs tend to be overly restrictive, forcing AVs to yield excessively and compromising efficiency. Striking a balance between safety and performance thus remains a difficult challenge in CBF-based approaches.

The emergence of data-driven methodologies has offered alternative pathways for modeling complex vehicular interactions~\cite{dai2024marp}. However, these approaches encounter significant limitations in interpretability and generalization across diverse traffic configurations~\cite{hassija2024interpreting}. Learning-based systems, despite their capacity to capture intricate behavioral patterns~\cite{tian2025evaluating,xu2024towards}, require extensive training datasets and often lack the theoretical foundations necessary for safety guarantees in critical scenarios~\cite{rasol2024exploring,peng2024predictive}. Furthermore, the black-box nature of these systems impedes their deployment in safety-critical applications where decision transparency is paramount.

In recent years, MFG theory has emerged as a promising framework for modeling interactive driving. Unlike traditional game-theoretic approaches, MFG approximates the aggregate behavior of a large population of vehicles using a mean field term, thereby avoiding the need to model every pairwise interaction explicitly and mitigating the scalability issues inherent in conventional game formulations. However, existing MFG-based methods for autonomous driving often suffer from several limitations: they lack simulations across diverse traffic scenarios, fail to compare against other game-theoretic baselines, and remain disconnected from control-level optimization. A notable exception is~\cite{zheng2025mean}, which combines MFG with MPC-QP to capture collective vehicle dynamics while applying a receding-horizon optimization scheme for real-time control. This integration addresses the conservatism of Stackelberg games and the computational burden of Nash games, providing a more flexible and scalable solution for interactive driving. Nevertheless, most prior MFG models overlook the heterogeneity of driving styles among surrounding vehicles, limiting their realism and applicability in real-world traffic. Furthermore, they fail to ensure safe interactions in densely populated environments with varied behavioral agents—an essential capability for real-world deployment.
\begin{figure*}[t]
    \centering
    \includegraphics[width=0.9\linewidth]{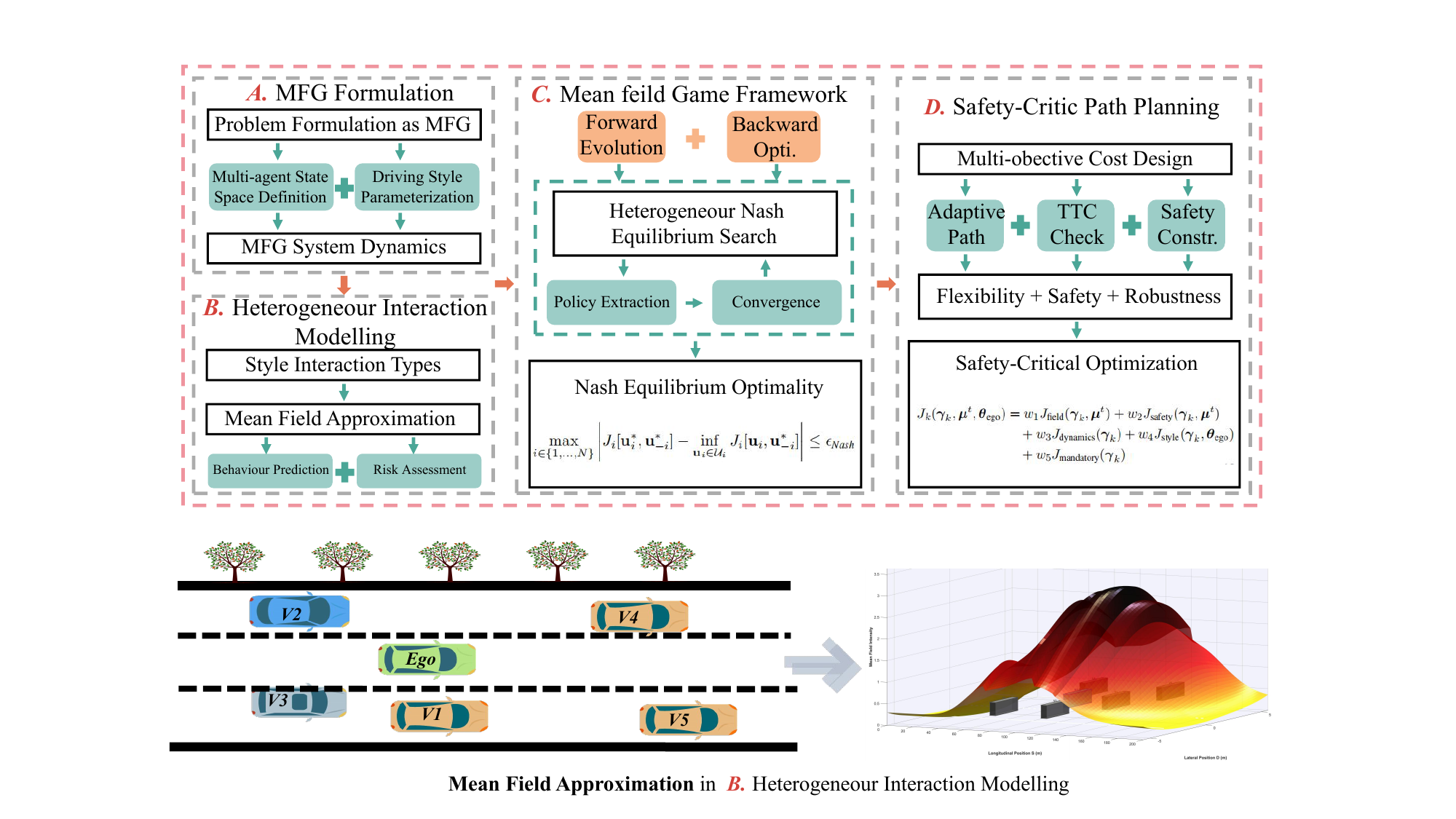}
    \caption{Framework of the heterogeneous MFG-based planner.}
    \label{fig:framework}
\end{figure*}
To overcome these limitations, we propose a comprehensive MFG-based framework that explicitly incorporates driving style heterogeneity into large-scale vehicular interaction modeling. Departing from conventional pairwise interaction schemes, our method employs mean field approximations to represent the collective influence of surrounding vehicles, each parameterized by empirically grounded behavioral characteristics. To ensure realism, we integrate trajectory data from the Next Generation Simulation (NGSIM) dataset for both calibration and validation of our driving style models. Additionally, we develop a safety-critical lane-changing algorithm tailored for dense multi-vehicle scenarios, combining dynamic safety margins, time-to-collision (TTC) analysis, and multi-layered constraints to ensure robust maneuvering amid varied and unpredictable surrounding behaviors. This enables our framework to achieve reliable, safe interaction in high-density, heterogeneous traffic environments, bridging theoretical modeling with real-world applicability.
The principal contributions of this work encompass:
\begin{itemize}
\item We propose a quantitative driving style model integrated with MFG that maps abstract behaviors to parameters including desired speed, safety factors, interaction weights, lane-change aggression, and reaction time. These are integrated into the MFG framework via an influence field approach, where each style produces distinct spatial patterns defined by longitudinal/lateral radii and intensity coefficients. This enhanced MFG model enables AVs to make style-aware, adaptive decisions in mixed-traffic environments. 

\item A safety-critical lane-changing algorithm is developed for dense multi-vehicle scenarios, incorporating dynamic safety distances, TTC analysis, and multi-layered constraints. It enables robust planning even when surrounded by laerge number of vehicles with varied and unpredictable behaviors.

\item We conduct comprehensive validation across three paradigms: (1) six front-rear driving style combinations (e.g., super-aggressive vs. conservative) with 100\% collision-free lane changes; (2) dense 15-vehicle surrounding scenarios, where our ego vehicle maintained safe margins and completed all maneuvers; and (3) real-world NGSIM-based tests, showing smooth integration of empirical behavior into the MFG framework. Across multiple trials, our method achieved zero collisions.
\end{itemize}

As shown in Fig.~\ref{fig:framework}, the framework starts with the MFG formulation of 
multi-agent dynamics and driving style parametrization (A). Heterogeneous interaction 
modelling (B) employs a mean field approximation, whose aggregated influence is exemplified 
by the 3D potential field visualization at the bottom of the figure: the yellow--red--black 
surface encodes the spatial distribution of interaction intensity generated by the ego and 
five surrounding vehicles. The MFG framework (C) then solves forward–backward equations to 
derive an $\varepsilon$-Nash equilibrium, and the resulting policy is integrated into a 
safety-critical path planner (D) to ensure robustness and safety in interactive driving.

The vehicle state variables are defined as follows: $s_i^t$ denotes the longitudinal position of vehicle $i$ in Frenet coordinates at time $t$; $v_{s,i}^t$ and $a_{s,i}^t$ represent the longitudinal velocity and acceleration of vehicle $i$ at time $t$, respectively; $d_i^t$ denotes the lateral displacement of vehicle $i$ from the reference path at time $t$; $v_{d,i}^t$ and $a_{d,i}^t$ represent the lateral velocity and acceleration of vehicle $i$ at time $t$, respectively. The state space bounds are characterized by $s_{\max}$ as the maximum longitudinal coordinate in the domain, $v_{\max}$ as the maximum allowable velocity, $a_{\max}$ and $a_{d,\max}$ as the maximum allowable longitudinal and lateral accelerations respectively, $d_{\max}$ as the maximum lateral displacement from the reference path, and $v_{d,\max}$ as the maximum allowable lateral velocity.

The control framework employs $u_{a,i}$ and $u_{d,i}$ as the longitudinal and lateral acceleration control inputs for vehicle $i$, respectively. The temporal discretization is governed by $T$ as the time horizon for the optimal control problem and $\Delta t$ as the time discretization step size. Stochastic disturbances are captured through $\mathbf{W}_i^t$, the Wiener process for vehicle $i$.

The spatial discretization employs $K$ and $J$ as the number of grid points in the longitudinal and lateral directions, respectively. The driving style diversity is characterized by $|\boldsymbol{\Theta}|$ as the cardinality of the driving style parameter space, while $|\mathcal{U}|$ represents the size of the discretized control space. Path planning complexity is determined by $N_{\text{paths}}$ as the number of candidate paths generated for evaluation and $n_p$ as the dimension of path parameterization.

Safety constraints are defined through multiple distance metrics: $d_{\text{base}}$ represents the baseline safety distance between vehicles; $d_{\text{min}}$ denotes the minimum allowable distance for collision avoidance; $d_{\text{safe}}(\cdot)$ represents the dynamic safety distance function dependent on relative states and driving styles. Risk thresholds include $\epsilon_{\text{coll}}$ as the maximum allowable collision probability threshold, $T_{\text{pred}}$ as the prediction horizon for safety assessment, and $T_{\text{critical}}$ as the critical time threshold for time-to-collision calculations. Risk function parameters include $\eta$ controlling the sharpness of risk gradients near collision boundaries, $\nu$ as the temporal discounting exponent, $\lambda_v$ and $\lambda_\phi$ as weights for velocity and angular differences in the risk kernel, and $\xi$ controlling the smoothness of risk transitions in boundary penalty functions.

Trajectory generation parameters include $\epsilon_{\text{target}}$ as the tolerance for reaching the target state, and multi-objective evaluation weights $w_1, w_2, w_3, w_4, w_5$ for different components in the path evaluation function. Lane-changing trajectory parameters comprise $\beta_k, \gamma_k, n_k$ as style-dependent parameters controlling transition function smoothness and variations, $s_{\text{start},k}$ and $s_{\text{end},k}$ as the start and end longitudinal positions for lane-changing maneuver $k$, and $\epsilon_{\text{slowdown}}$ and $\sigma_{\text{slowdown}}$ representing the magnitude and characteristic length of velocity reduction during lane changes.

Style-dependent comfort parameters include $w_{\text{jerk},i}, w_{\text{lateral},i}, w_{\text{aggr},i}, w_{\text{centripetal},i}$ as weights for jerk, lateral motion, aggressiveness, and centripetal acceleration penalties in the comfort cost function, respectively. Lane-changing behavior is governed by $w_{\text{mandatory},i}, w_{\text{transition},i}, w_{\text{smoothness},i}$ as weights for mandatory lane change, transition safety, and smoothness penalties, respectively. Additional comfort thresholds include $u_{\text{comfort},i}(\boldsymbol{\theta}_i)$ as the comfort threshold for control inputs based on driving style and $a_{\text{cent},\text{des},i}$ as the desired centripetal acceleration for vehicle $i$.

\subsubsection{Geometric and Interaction Parameters}

Geometric relationships are characterized by $\phi_i$ and $\phi_j$ as the heading angles of vehicles $i$ and $j$ respectively, and $R_{\text{curvature}}$ as the road curvature radius. Interaction modeling employs $\sigma_{s,\text{base}}^2$ and $\sigma_{d,\text{base}}^2$ as baseline interaction ranges in longitudinal and lateral directions respectively, and $\sigma_{\text{discrete}}$ as the characteristic distance for discrete vehicle interactions.

\subsubsection{Algorithm Convergence Parameters}

The iterative MFG algorithm convergence is controlled by $\epsilon_{\text{conv}}$ as the convergence tolerance, $\epsilon_{\text{Nash}}$ as the Nash equilibrium optimality tolerance, $\gamma$ as the relaxation parameter in the iterative algorithm, $\rho$ as the convergence rate factor, and $C_0$ as a constant depending on initial conditions.

\section{Heterogeneous Mean Field Game Formulation for Safety-Critical Surrounding Scenarios}
\label{sec3}

\subsection{Multi-Agent Stochastic Differential System with Driving Style Heterogeneity}
\label{sec3a}

We formulate the challenging surrounding scenario navigation problem involving $N$ autonomous vehicles with heterogeneous driving styles as a continuous-time MFG on the augmented state space $\mathcal{S} \times \boldsymbol{\Theta} \subset \mathbb{R}^6 \times \mathbb{R}^7$. Each vehicle $i \in \mathcal{I} = \{1,2,\ldots,N\}$ is characterized by its Frenet coordinate state vector $\mathbf{X}_i^t$ and driving style parameter vector $\boldsymbol{\theta}_i$.

The heterogeneous vehicle dynamics follow the controlled stochastic differential equation system:
\begin{proof}[Proof Sketch]
The uniqueness follows from the contraction mapping principle applied to the fixed point operator. The convergence rate is established by analyzing the spectral radius of the linearized operator around the equilibrium. The strong monotonicity condition ensures that the combined operator is a contraction in the product space equipped with the appropriate norm.
\end{proof}

\subsubsection{Stability Analysis Under Perturbations}

\begin{lemma}[Robustness to Style Parameter Perturbations]
\label{lem:robustness}
Let $(\boldsymbol{\mu}^*, \{\mathbf{u}_i^*\}_{i=1}^N)$ be the equilibrium for style parameters $\{\boldsymbol{\theta}_i\}_{i=1}^N$, and let $(\tilde{\boldsymbol{\mu}}, \{\tilde{\mathbf{u}}_i\}_{i=1}^N)$ be the equilibrium for perturbed parameters $\{\boldsymbol{\theta}_i + \delta\boldsymbol{\theta}_i\}_{i=1}^N$. Then:
\begin{equation}
W_2(\boldsymbol{\mu}^*, \tilde{\boldsymbol{\mu}}) + \sum_{i=1}^N \|\mathbf{u}_i^* - \tilde{\mathbf{u}}_i\|_{L^2} \leq C_{\text{robust}} \sum_{i=1}^N \|\delta\boldsymbol{\theta}_i\|
\end{equation}
for some constant $C_{\text{robust}} > 0$ depending on the system parameters.
\end{lemma}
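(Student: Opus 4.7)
The plan is to obtain the Lipschitz dependence of the equilibrium on the style vector by perturbing the same fixed-point equation whose contractivity underlies the uniqueness result invoked in the previous proof sketch. Write the equilibrium condition compactly as $z = \mathcal{F}(z;\boldsymbol{\theta})$, where $z = (\boldsymbol{\mu},\{\mathbf{u}_i\}_{i=1}^N)$ lives in the product space equipped with $W_2$ on the measure component and $L^2$ on the controls, and where $\boldsymbol{\theta} = \{\boldsymbol{\theta}_i\}_{i=1}^N$. The uniqueness theorem already guarantees that $\mathcal{F}(\cdot;\boldsymbol{\theta})$ is a strict contraction with constant $\kappa<1$ on a neighborhood of the fixed point. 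What is needed additionally is a joint Lipschitz estimate $\|\mathcal{F}(z;\boldsymbol{\theta}+\delta\boldsymbol{\theta})-\mathcal{F}(z;\boldsymbol{\theta})\| \le L_\theta \sum_i \|\delta\boldsymbol{\theta}_i\|$, after which the conclusion follows by the standard perturbation-of-fixed-points telescoping $z^*-\tilde z = [\mathcal{F}(z^*;\boldsymbol{\theta})-\mathcal{F}(\tilde z;\boldsymbol{\theta})] + [\mathcal{F}(\tilde z;\boldsymbol{\theta})-\mathcal{F}(\tilde z;\boldsymbol{\theta}+\delta\boldsymbol{\theta})]$ and the identification $C_{\text{robust}} = L_\theta/(1-\kappa)$.

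The work then concentrates on showing that the two building blocks of $\mathcal{F}$, namely the backward HJB solver and the forward Fokker--Planck propagation, are both Lipschitz in $\boldsymbol{\theta}$. I would carry this out in three stages. First, for the HJB stage I would differentiate the running cost, terminal cost, and style-dependent drift with respect to each component of $\boldsymbol{\theta}_i$ (desired speed, safety factor, reaction time, interaction weight, and the longitudinal/lateral radii and intensity of the influence field), bound each partial derivative uniformly on the admissible state region, and apply the standard comparison principle to upgrade these pointwise bounds to a Lipschitz estimate on the value function in $C^0_{t,x}$, and then on the optimal feedback control in $L^2$ via the uniform convexity of the Hamiltonian. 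Second, for the Fokker--Planck stage I would use Kantorovich duality for $W_2$ together with the just-established Lipschitz dependence of the optimal drift on $\boldsymbol{\theta}$ to bound the Wasserstein distance between the two measure flows by a Gr\"onwall argument. Third, because $\boldsymbol{\theta}$ also enters the mean field term through the influence-field construction of Section~\ref{sec3a}, I would verify that this mapping is Lipschitz in $\boldsymbol{\theta}$ by inspecting the explicit longitudinal/lateral kernel against its intensity coefficient.

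The main obstacle I expect is the HJB sensitivity estimate, because the cost functional is only mildly regular through the dynamic safety distance $d_{\text{safe}}(\cdot)$ and the boundary risk terms governed by $\xi$, $\eta$, $\nu$. To handle this I would exploit the smoothness parameter $\xi$ and the gradient-sharpness parameter $\eta$ to keep the cost integrand in $C^1$ with style-uniform derivative bounds, whose magnitudes are absorbed into $C_{\text{robust}}$; if necessary I would first approximate by a mollified surrogate cost, establish the estimate there, and pass to the limit. A secondary subtlety is that the coupling through the ego--neighbor interaction weights propagates perturbations between agents, which necessitates the joint Banach fixed-point formulation rather than per-agent estimates; this is already built into the contraction framework invoked above, so no new machinery beyond careful bookkeeping of the $N$-fold sum on the right-hand side should be required.
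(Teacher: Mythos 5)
The paper states Lemma~\ref{lem:robustness} without any proof, so there is no in-paper argument to compare against; judged on its own terms, your fixed-point perturbation scheme is exactly the route the paper's own contraction machinery suggests (the sketch accompanying Theorem~\ref{thm:uniqueness_convergence} sets up the combined operator of \eqref{eq:fixed_point_conditions} as a contraction on the product space with $W_2$ on the measure component and $L^2$ on the controls). Your telescoping is also arranged correctly: contractivity of $\mathcal{F}(\cdot;\boldsymbol{\theta})$ is invoked only at the unperturbed parameter, so no uniformity of the contraction constant in $\boldsymbol{\theta}$ is needed, and $C_{\text{robust}} = L_\theta/(1-\kappa)$ is the right skeleton.

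The one step that fails as literally stated for this model is the parametric Lipschitz bound $\|\mathcal{F}(z;\boldsymbol{\theta}+\delta\boldsymbol{\theta})-\mathcal{F}(z;\boldsymbol{\theta})\| \le L_\theta \sum_i \|\delta\boldsymbol{\theta}_i\|$. Several style-dependent ingredients are piecewise \emph{constant} in $\boldsymbol{\theta}$, not merely of low regularity: the behavioral compatibility factor $\Gamma(\boldsymbol{\theta}_i,\boldsymbol{\theta}_j)$ in \eqref{eq:behavioral_compatibility}, the style velocity factor $\Psi_{\text{style}}(\boldsymbol{\theta})$ in \eqref{eq:style_velocity_factor}, and the indicator-type penalties in \eqref{eq:comfort_cost} and \eqref{eq:lane_changing_cost}. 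At a boundary between style classes $\boldsymbol{\Theta}_k$ an arbitrarily small $\delta\boldsymbol{\theta}_i$ changes these factors by an $O(1)$ amount, so no estimate linear in $\|\delta\boldsymbol{\theta}_i\|$ can hold there; the lemma needs either the restriction that each $\boldsymbol{\theta}_i+\delta\boldsymbol{\theta}_i$ stays in the same class $\boldsymbol{\Theta}_k$ or a Lipschitz smoothing of these categorical terms. Your mollification remedy is aimed at $d_{\text{safe}}$, $\varphi_{\text{boundary}}$ and the $\eta,\nu,\xi$ risk terms, which is the right idea, but it must be extended (or the class-preserving restriction stated explicitly) to cover these discontinuities, with the smoothing constants absorbed into $L_\theta$ and hence $C_{\text{robust}}$. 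With that caveat closed, the remaining program --- comparison-principle sensitivity for the HJB stage, uniform convexity of the Hamiltonian to pass from value functions to feedback controls, and a Kantorovich-duality plus Gr\"onwall estimate for the Fokker--Planck flow in $W_2$ --- is standard and adequate.
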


This robustness result ensures that the MFG equilibrium is stable under small perturbations in driving style parameters, which is crucial for practical implementation.

\subsection{Computational Complexity Analysis}
\label{sec3h}

The computational complexity of the heterogeneous MFG algorithm consists of several key components that scale with different problem dimensions.

\subsubsection{Grid-Based Computations}

The spatial discretization contributes the following complexities:
\begin{itemize}
\item \textbf{HJB solver}: $O(|\boldsymbol{\Theta}| \times K \times J \times |\mathcal{U}| \times T/\Delta t)$ per iteration
\item \textbf{Density evolution}: $O(|\boldsymbol{\Theta}| \times K \times J \times T/\Delta t)$ per iteration  
\item \textbf{Velocity field update}: $O(|\boldsymbol{\Theta}| \times K \times J \times |\mathcal{U}|)$ per time step
\end{itemize}

\subsubsection{Path Generation and Evaluation}

The stochastic path planning component adds:
\begin{itemize}
\item \textbf{Candidate generation}: $O(N_{\text{paths}} \times n_p \times T/\Delta t)$ where $n_p$ is parameter dimension
\item \textbf{Path evaluation}: $O(N_{\text{paths}} \times N \times T/\Delta t \times |\boldsymbol{\Theta}|)$ for MFG cost computation
\item \textbf{Safety validation}: $O(N_{\text{paths}} \times T/\Delta t \times N^2)$ for collision checking
\end{itemize}

\subsubsection{Overall Complexity}

The total computational complexity per MFG iteration is:
\begin{equation}
\label{eq:total_complexity}
\begin{aligned}
\mathcal{C}_{\text{iter}} &= O\bigg(\frac{T}{\Delta t} \times |\boldsymbol{\Theta}| \times K \times J \times |\mathcal{U}| \\
&\quad + N_{\text{paths}} \times N \times \frac{T}{\Delta t} \times (|\boldsymbol{\Theta}| + N)\bigg)
\end{aligned}
\end{equation}

Considering convergence requirements, the total complexity is:
\begin{equation}
\label{eq:total_algorithm_complexity}
\begin{split}
\mathcal{C}_{\text{total}}
= O\!\Bigl(
& \frac{\log(1/\epsilon_{\text{conv}})}{\Delta t}
\times |\boldsymbol{\Theta}| \times K \times J \times |\mathcal{U}| \\
& \times T \times N_{\text{paths}} \times N^2
\Bigr).
\end{split}
\end{equation}

\subsubsection{Scalability Analysis}

The algorithm exhibits the following scalability characteristics:
\begin{itemize}
\item \textbf{Vehicle number $N$}: Quadratic scaling in safety validation, linear in path evaluation
\item \textbf{Style diversity $|\boldsymbol{\Theta}|$}: Linear scaling in all major components
\item \textbf{Spatial resolution $(K,J)$}: Linear scaling, allowing trade-offs between accuracy and efficiency
\item \textbf{Control space $|\mathcal{U}|$}: Linear scaling in HJB solver, independent of path planning
\end{itemize}

\subsubsection{Optimization Strategies}

Several optimization techniques can reduce practical computational cost:

\begin{enumerate}
\item \textbf{Parallel Processing}: 
   - HJB solutions for different styles can be computed in parallel
   - Path generation and evaluation are embarrassingly parallel
   - Expected speedup: $O(|\boldsymbol{\Theta}|)$ for HJB, $O(N_{\text{paths}})$ for path planning

\item \textbf{Adaptive Grid Refinement}:
   - Use coarse grids in low-density regions
   - Refine grids around vehicle clusters and safety-critical zones
   - Potential complexity reduction: $30-50\%$ in typical scenarios

\item \textbf{Hierarchical Path Planning}:
   - First-stage coarse path selection using simplified cost
   - Second-stage detailed evaluation for top candidates
   - Reduces effective $N_{\text{paths}}$ by factor of $5-10$

\item \textbf{Smart Initialization}:
   - Use previous solution as warm start
   - Reduces convergence iterations by $40-60\%$ in tracking scenarios
\end{enumerate}

The optimized complexity becomes:
\begin{equation}
\label{eq:optimized_complexity}
\begin{aligned}
\mathcal{C}_{\text{optimized}}
= O\!\Biggl(
& \frac{\log(1/\epsilon_{\text{conv}})}{\Delta t \times \text{speedup}_{\text{parallel}}} \\
& \times \frac{|\boldsymbol{\Theta}| \times K \times J \times |\mathcal{U}| \times T \times N_{\text{paths}} \times N^2}{\text{reduction}_{\text{adaptive}}}
\Biggr).
\end{aligned}
\end{equation}
where typical values are $\text{speedup}_{\text{parallel}} = 4-8$ and $\text{reduction}_{\text{adaptive}} = 2-3$.

\subsection{Theoretical Guarantees and Performance Bounds}
\label{sec3i}

\subsubsection{Safety Guarantee}

\begin{theorem}[Probabilistic Safety Guarantee]
\label{thm:safety}
Under the equilibrium $(\boldsymbol{\mu}^*, \{\mathbf{u}_i^*\}_{i=1}^N)$, the probability of collision for any vehicle $i$ is bounded by:
\begin{equation}
\mathbb{P}[\text{collision for vehicle } i] \leq \epsilon_{\text{coll}} + \delta_{\text{approx}}(\Delta t, K, J)
\end{equation}
where $\delta_{\text{approx}} \rightarrow 0$ as discretization parameters refine.
\end{theorem}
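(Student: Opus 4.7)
The plan is to decompose the actual (discretized) collision probability into a continuous-time equilibrium part, which should be controlled by the safety structure of the cost, and a numerical-discretization part that vanishes with mesh refinement. First I would write
\begin{equation*}
\mathbb{P}_{\text{disc}}[\text{coll. for } i] \le \mathbb{P}_{\text{cont}}[\text{coll. for } i] + \bigl|\mathbb{P}_{\text{disc}}[\text{coll. for } i] - \mathbb{P}_{\text{cont}}[\text{coll. for } i]\bigr|,
\end{equation*}
where $\mathbb{P}_{\text{cont}}$ is evaluated under the exact continuous MFG equilibrium $(\boldsymbol{\mu}^*,\{\mathbf{u}_i^*\}_{i=1}^N)$ and $\mathbb{P}_{\text{disc}}$ under the equilibrium actually returned by the grid-and-time-discretized algorithm.

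Second, to bound $\mathbb{P}_{\text{cont}}[\text{coll. for } i]$ by $\epsilon_{\text{coll}}$, I would appeal to the safety-critical layer together with the barrier-like penalties in the running cost. The dynamic safety distance $d_{\text{safe}}(\cdot)$, the TTC threshold $T_{\text{critical}}$, and the multi-layered constraints are encoded through risk kernels whose intensity parameters $\eta$ and $\xi$ make the penalty blow up as trajectories approach $d_{\min}$. Any control $\mathbf{u}_i^*$ that is a best response under $\boldsymbol{\mu}^*$ therefore keeps the joint process inside the feasible safety set with probability at least $1-\epsilon_{\text{coll}}$. I would make this rigorous either by recasting $\epsilon_{\text{coll}}$ as an explicit chance constraint in the best-response problem (and invoking a Lagrangian argument for the optimal penalty multiplier) or by a Chebyshev-type estimate on the Wiener-driven tail of $\mathbf{W}_i^t$ combined with the calibration of the penalty weights $\lambda_v,\lambda_\phi$.

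Third, for the discretization term I would invoke standard consistency and stability of the semi-Lagrangian/finite-difference scheme used for the coupled HJB and Fokker--Planck equations, whose local truncation error is $O(\Delta t + h_K^2 + h_J^2)$ with $h_K = s_{\max}/K$ and $h_J = d_{\max}/J$. Using Lipschitz continuity of the (mollified) collision indicator in $(s,d,v_s,v_d)$, together with the Lipschitz dependence of the equilibrium on its inputs supplied by Lemma~\ref{lem:robustness} and the contraction underlying uniqueness, the local error propagates to a global bound
\begin{equation*}
\delta_{\text{approx}}(\Delta t, K, J) \le C_1\,\Delta t + C_2\,h_K^2 + C_3\,h_J^2,
\end{equation*}
which vanishes as $(\Delta t, h_K, h_J) \to 0$, as required.

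The hardest step will be the second one: turning the penalty-based suppression of collisions into a genuine hard probability bound $\epsilon_{\text{coll}}$. Penalty terms a priori only control collision probability in expectation, so obtaining a real tail bound requires either formulating the best-response problem as a chance-constrained program and carrying out the duality argument explicitly, or carefully calibrating the penalty schedule so that a Markov/Chebyshev inequality closes the loop at the prescribed level. A secondary difficulty is keeping the constants $C_1, C_2, C_3$ uniform in the style cardinality $|\boldsymbol{\Theta}|$, because the monotonicity modulus of the coupled forward--backward operator governs how local truncation errors accumulate across the iteration and can degrade as style heterogeneity grows.
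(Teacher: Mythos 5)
The paper states Theorem~\ref{thm:safety} without any proof at all --- there is no argument for it anywhere in the text --- so your plan cannot be matched against an authorial proof and must be judged on its own terms. Your two-term decomposition (continuous-time equilibrium collision probability bounded by $\epsilon_{\text{coll}}$, plus a discretization error $\delta_{\text{approx}}(\Delta t, K, J)$ controlled by consistency and stability of the HJB/Fokker--Planck scheme and propagated through the Lipschitz/contraction structure and Lemma~\ref{lem:robustness}) is the natural reading of the statement and is almost certainly what the authors intend for the second term, so at the level of a plan the numerical half is reasonable, even though the paper never specifies the scheme or its order, so the constants and rates you quote are assumptions rather than facts of the paper.

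Two remarks on the step you yourself flag as hardest. First, you make it harder than the paper's formulation requires: the collision bound is already imposed as a hard chance constraint, since $\mathcal{M}_{\text{coll},i}^t$ in \eqref{eq:collision_manifold} demands $\mathbb{P}_{\boldsymbol{\mu}^t}[\mathcal{E}_{\text{coll},i}^t] \leq \epsilon_{\text{coll}}$ and the optimal control problem \eqref{eq:mfg_optimal_control} is posed subject to \eqref{eq:safety_constraint_set}; for any feasible equilibrium the continuous part then holds by construction, and the genuine work is proving that the equilibrium delivered by Theorem~\ref{thm:existence} (whose proof sketch never mentions the constraint manifold) is actually feasible --- your penalty-calibration/Chebyshev route would only be needed if safety entered solely through the soft cost $\ell_{\text{safety}}$. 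Second, there is a gap that neither your plan nor the theorem statement addresses: the chance constraint is formulated with respect to the mean-field measure $\boldsymbol{\mu}^t$, whereas the claimed bound concerns the collision probability of an actual vehicle $i$ in the finite-$N$ system; bridging the two requires a propagation-of-chaos or mean-field approximation estimate whose error depends on $N$, and no such term appears in $\delta_{\text{approx}}(\Delta t, K, J)$. As written, your argument silently identifies these two probabilities; a complete proof must either add that error term or restrict the claim to the mean-field limit.
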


\subsubsection{Optimality Gap}

\begin{theorem}[Nash Equilibrium Optimality]
\label{thm:optimality}
The computed equilibrium $(\boldsymbol{\mu}^*, \{\mathbf{u}_i^*\}_{i=1}^N)$ satisfies:
\begin{equation}
\max_{i \in \{1,\ldots,N\}} \left|J_i[\mathbf{u}_i^*, \mathbf{u}_{-i}^*] - \inf_{\mathbf{u}_i \in \mathcal{U}_i} J_i[\mathbf{u}_i, \mathbf{u}_{-i}^*]\right| \leq \epsilon_{\text{Nash}}
\end{equation}
where $\epsilon_{\text{Nash}}$ depends on convergence tolerance and discretization errors.
\end{theorem}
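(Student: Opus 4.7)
The plan is to decompose the per-agent regret $\Delta_i := J_i[\mathbf{u}_i^*, \mathbf{u}_{-i}^*] - \inf_{\mathbf{u}_i \in \mathcal{U}_i} J_i[\mathbf{u}_i, \mathbf{u}_{-i}^*]$ into three independently controllable pieces: (i) a fixed-point iteration residual $\epsilon_{\text{FP}}$ reflecting that the algorithm is only run until the contraction residual drops below $\epsilon_{\text{conv}}$; (ii) a discretization error $\epsilon_{\text{disc}}(\Delta t, K, J, |\mathcal{U}|)$ from replacing the continuous HJB and Fokker--Planck operators by their grid counterparts; and (iii) a mean-field substitution error $\epsilon_{\text{MF}}$ measuring the gap between the empirical population distribution induced by $\mathbf{u}_{-i}^*$ and the limiting measure $\boldsymbol{\mu}^*$ against which $\mathbf{u}_i^*$ was optimized.

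First I would control (i): since the inner loop is a contraction whose modulus is available from the strong monotonicity argument used in the uniqueness proof, the stopping criterion $\|\cdot\|<\epsilon_{\text{conv}}$ immediately yields $\epsilon_{\text{FP}} = C_1 \epsilon_{\text{conv}}/(1-\rho)$ via the standard \emph{a posteriori} estimate, with $\rho$ the convergence rate factor from Section~\ref{sec3h}. For (ii), I would invoke a Barles--Souganidis consistency/monotonicity/stability argument for the discretized HJB together with a CFL-type bound on the forward density update, giving $\epsilon_{\text{disc}} = O(\Delta t + 1/K + 1/J + 1/|\mathcal{U}|)$ on compact sets; this is then transported through the cost functional using Lipschitz continuity of the running cost in state and control. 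For (iii), I would apply Lemma~\ref{lem:robustness} interpreting the discrepancy between empirical and limiting measures as an effective perturbation of the interaction operator, and combine it with a $W_2$ concentration bound over the compact style set $\boldsymbol{\Theta}$ to obtain $\epsilon_{\text{MF}} = O(N^{-1/2})$.

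Combining the three pieces I would set
\[
\epsilon_{\text{Nash}} := \epsilon_{\text{FP}} + \epsilon_{\text{disc}} + \epsilon_{\text{MF}},
\]
and verify uniformity of the supremum over $i$ using that all constants depend only on problem data (the baseline variances $\sigma_{s,\text{base}}^2, \sigma_{d,\text{base}}^2$, Lipschitz moduli of the influence field, and compactness of $\boldsymbol{\Theta}$). The dynamic programming principle then identifies $\inf_{\mathbf{u}_i} J_i[\mathbf{u}_i, \mathbf{u}_{-i}^*]$ with the value function of the continuous HJB associated with the frozen strategies of the other agents, and a verification-theorem comparison closes the bound on $\Delta_i$.

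The hardest step will be (iii). When vehicle $i$ unilaterally deviates, the empirical distribution over the remaining $N-1$ heterogeneous agents is no longer guaranteed to match the fixed-point measure $\boldsymbol{\mu}^*$ used when constructing $\mathbf{u}_i^*$, and this discrepancy must be pushed through the nonlinear influence-field kernel without inflating the $N^{-1/2}$ rate. I expect to need a synchronous coupling of the $N$ SDEs driven by a common family of Wiener processes $\mathbf{W}_i^t$ together with a one-sided Lipschitz estimate on the drift. Making the coupling respect the style heterogeneity, so that $W_2$ convergence holds on the augmented space $\mathcal{S}\times\boldsymbol{\Theta}$ rather than on $\mathcal{S}$ alone, is the technical crux that ultimately determines the explicit form of $\epsilon_{\text{Nash}}$.
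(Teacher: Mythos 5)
You should know at the outset that the paper never actually proves Theorem~\ref{thm:optimality}: it supplies proof sketches only for Theorem~\ref{thm:existence} (Schauder fixed point) and Theorem~\ref{thm:uniqueness_convergence} (contraction mapping), and states the safety and optimality bounds of Section~\ref{sec3i} without argument. So there is no paper proof to match; what you propose is the canonical route (the Huang--Caines--Malham\'e / Carmona--Delarue style $\epsilon$-Nash argument), and your first two error terms line up exactly with the ingredients the paper does provide: the a posteriori contraction estimate from Theorem~\ref{thm:uniqueness_convergence} with rate $\rho=(1-\gamma)+\gamma L_{\mathcal{T}}L_{\mathcal{G}}$ gives your $\epsilon_{\text{FP}}$, and a consistency/monotonicity/stability argument for the grid HJB--Fokker--Planck pair gives your $\epsilon_{\text{disc}}$. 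In that sense your sketch is more complete than the paper, which tacitly folds everything into the phrase ``convergence tolerance and discretization errors.''

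The genuine gap is in your term (iii), and it is twofold. First, the claimed rate $\epsilon_{\text{MF}}=O(N^{-1/2})$ obtained from ``a $W_2$ concentration bound'' is not available by plain empirical-measure concentration: the augmented space is $\mathcal{S}\times\boldsymbol{\Theta}\subset\mathbb{R}^{6}\times\mathbb{R}^{7}$, and in dimension $d>4$ the generic rate for $W_2(\boldsymbol{\mu}_N,\boldsymbol{\mu}^*)$ is $O(N^{-1/d})$, so recovering $N^{-1/2}$ requires additional smoothness of $J_i$ in the measure argument (a flat/L-derivative or master-equation regularity estimate), which neither the paper's assumptions nor your sketch supplies. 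Second, the heterogeneity here is not of the i.i.d.-sampled-types form that propagation-of-chaos arguments presuppose: the paper fixes a small number of deterministic style vectors $\boldsymbol{\theta}_i$ (indeed $N\approx 18$ in the experiments), so the empirical style marginal does not converge to any limiting law, and the synchronous-coupling argument you outline has nothing to couple against unless you either condition on the fixed style profile or reinterpret $\boldsymbol{\mu}^*$ as the type-conditioned measure; Lemma~\ref{lem:robustness} controls perturbations of the $\boldsymbol{\theta}_i$, not the empirical-versus-limit discrepancy, so it cannot substitute for this step. If instead $J_i$ is defined directly against the mean-field measure $\boldsymbol{\mu}^*$ (which is the only reading under which the paper's ``$\epsilon_{\text{Nash}}$ depends on convergence tolerance and discretization errors'' is literally true), then your term (iii) should be dropped rather than estimated, and the proof reduces to your steps (i)--(ii) plus the verification-theorem comparison you describe. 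Either way, the statement as written leaves this ambiguity unresolved, and your sketch correctly identifies but does not yet close it.
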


\begin{equation}
\label{eq:sde_heterogeneous}
\mathrm{d}\mathbf{X}_i^t = \boldsymbol{f}_i(\mathbf{X}_i^t, \mathbf{u}_i^t, \boldsymbol{\mu}^t, \boldsymbol{\theta}_i) \mathrm{d}t + \boldsymbol{\Sigma}_i(\mathbf{X}_i^t, \boldsymbol{\theta}_i) \mathrm{d}\mathbf{W}_i^t
\end{equation}
where the Frenet state vector encompasses longitudinal and lateral motion components:
\begin{equation}
\label{eq:state_vector}
\mathbf{X}_i^t = \begin{bmatrix} s_i^t \\ v_{s,i}^t \\ a_{s,i}^t \\ d_i^t \\ v_{d,i}^t \\ a_{d,i}^t \end{bmatrix} \in \mathcal{S} \subset \mathbb{R}^6
\end{equation}

with state constraints:
\begin{equation}
\label{eq:state_space}
\begin{split}
\mathcal{S} = \{\mathbf{x}\in\mathbb{R}^6 \mid\;
& s\in[0,s_{\max}],\; v_s\in[0,v_{\max}],\; |a_s|\le a_{\max},\\
& d\in[-d_{\max},d_{\max}],\; |v_d|\le v_{d,\max},\;\\ & |a_d|\le a_{d,\max}\}.
\end{split}
\end{equation}

The driving style parameter vector $\boldsymbol{\theta}_i$ encapsulates heterogeneous behavioral characteristics across multiple dimensions:
\begin{equation}
\label{eq:driving_style_vector}
\boldsymbol{\theta}_i = \begin{bmatrix} 
v_{\text{des},i} \\ a_{\max,i} \\ a_{\min,i} \\ \kappa_{\text{safe},i} \\ \omega_{\text{interact},i} \\ \alpha_{\text{aggr},i} \\ \tau_{\text{react},i} 
\end{bmatrix} \in \boldsymbol{\Theta} \subset \mathbb{R}^7_+
\end{equation}
where $\boldsymbol{\Theta} = \bigcup_{k \in \mathcal{K}} \boldsymbol{\Theta}_k$ represents the union of style-specific parameter subspaces:
\begin{equation}
\mathcal{K} = \left\{
\begin{aligned}
&\text{ego},\ \text{super-aggressive},\ \text{aggressive}, \\
&\text{conservative},\ \text{normal},\ \text{competitive}
\end{aligned}
\right\}
\end{equation}

Each style subset $\boldsymbol{\Theta}_k$ is defined by characteristic parameter ranges:
\begin{equation}
\label{eq:theta_set}
\begin{split}
\boldsymbol{\Theta}_k
= \Bigl\{ \boldsymbol{\theta} \in \mathbb{R}^7_+ \ \Big|\ 
& v_{\text{des}} \in [v_{\min,k}, v_{\max,k}],\\
& a_{\max} \in [a_{\min,k}, a_{\max,k}],\\
& \kappa_{\text{safe}} \in [\kappa_{\min,k}, \kappa_{\max,k}],\\
& \alpha_{\text{aggr}} \in [\alpha_{\min,k}, \alpha_{\max,k}]
\Bigr\}.
\end{split}
\end{equation}

\begin{table}[t]
\centering
\captionsetup{labelfont={sc}, textfont={sc}, labelsep=newline, justification=centering}
\renewcommand{\arraystretch}{1}
\caption{Heterogeneous Driving Style Parameter Specifications}
\label{tab:driving_styles}
\begin{threeparttable}
\setlength{\tabcolsep}{3.8pt} 
\resizebox{\linewidth}{!}{%
\begin{tabular}{@{}c|c|c|c|c|c|c|c@{}}
\hline \hline
\multirow{2}{*}{Style} & $v_{\text{des}}$ & $a_{\max}$ & $a_{\min}$ & $\kappa_{\text{safe}}$ & $\omega_{\text{interact}}$ & $\alpha_{\text{aggr}}$ & $\tau_{\text{react}}$ \\
& (m/s) & (m/s$^2$) & (m/s$^2$) & (–) & (–) & (–) & (s) \\
\hline
Ego              & 25 & 2.5 & -4.0 & 1.4 & 1.2 & 0.7  & 0.8 \\
\hline
Super-Aggressive & 35 & 4.0 & -6.5 & 0.4 & 0.3 & 0.95 & 0.4 \\
\hline
Aggressive       & 32 & 3.5 & -5.5 & 0.6 & 0.4 & 0.85 & 0.5 \\
\hline
Conservative     & 16 & 1.0 & -2.5 & 2.8 & 2.5 & 0.15 & 1.5 \\
\hline
Normal           & 24 & 2.2 & -4.2 & 1.3 & 1.0 & 0.5  & 1.0 \\
\hline
Competitive      & 29 & 3.2 & -5.0 & 0.7 & 0.6 & 0.8  & 0.6 \\
\hline \hline
\end{tabular}%
}
\begin{tablenotes}
\small
\item $v_{\text{des}}$: desired speed; $a_{\max}$: maximum acceleration; $a_{\min}$: maximum deceleration; $\kappa_{\text{safe}}$: safety margin factor; $\omega_{\text{interact}}$: interaction weight; $\alpha_{\text{aggr}}$: aggressiveness coefficient; $\tau_{\text{react}}$: reaction time.
\end{tablenotes}
\end{threeparttable}
\end{table}

The drift term $\boldsymbol{f}_i(\mathbf{X}_i^t, \mathbf{u}_i^t, \boldsymbol{\mu}^t, \boldsymbol{\theta}_i)$ incorporates both individual dynamics and mean field interactions:
\begin{equation}
\label{eq:drift_function}
\boldsymbol{f}_i(\mathbf{x}, \mathbf{u}, \boldsymbol{\mu}, \boldsymbol{\theta}_i) = \boldsymbol{f}_{\text{ind}}(\mathbf{x}, \mathbf{u}, \boldsymbol{\theta}_i) + \boldsymbol{h}_i(\mathbf{x}, \boldsymbol{\mu}, \boldsymbol{\theta}_i)
\end{equation}

where the individual dynamics follow the enhanced Frenet kinematic model:
\begin{equation}
\label{eq:individual_dynamics}
\boldsymbol{f}_{\text{ind}}(\mathbf{x}, \mathbf{u}, \boldsymbol{\theta}_i) = \begin{bmatrix}
v_s \\
a_s \\
\text{sat}_{[-a_{\max,i}, a_{\max,i}]}(u_{a,i}) \\
v_d \\
a_d \\
\text{sat}_{[-a_{d,\max}, a_{d,\max}]}(u_{d,i})
\end{bmatrix}
\end{equation}

The control input space for vehicle $i$ is defined as:
\begin{equation}
\mathcal{U}_i = \left\{ \mathbf{u} = \begin{bmatrix} u_{a,i} \\ u_{d,i} \\ \delta_i \end{bmatrix} \;\middle|\; 
\begin{aligned}
&|u_{a,i}| \leq a_{\max,i}, \\
&|u_{d,i}| \leq a_{d,\max}, \\
&\delta_i \in \{-1, 0, 1\}
\end{aligned}
\right\}
\end{equation}

where $\delta_i$ represents the discrete lane-changing decision.

\subsection{Mean Field Density Evolution and Heterogeneous Interaction Dynamics}
\label{sec3b}

The empirical measure of the heterogeneous vehicle population is defined as:
\begin{equation}
\label{eq:empirical_measure}
\boldsymbol{\mu}_N^t = \frac{1}{N} \sum_{i=1}^N \delta_{(\mathbf{X}_i^t, \boldsymbol{\theta}_i)}
\end{equation}

The mean field density $\boldsymbol{\mu}^t: \mathcal{S} \times \boldsymbol{\Theta} \times [0,T] \rightarrow \mathbb{R}_+$ evolves according to the heterogeneous Fokker-Planck equation:
\begin{equation}
\label{eq:fokker_planck_heterogeneous}
\begin{aligned}
\frac{\partial \boldsymbol{\mu}^t}{\partial t} &+ \nabla_{\mathbf{x}} \cdot (\boldsymbol{f}(\mathbf{x}, \mathbf{u}^t(\mathbf{x}, \boldsymbol{\theta}), \boldsymbol{\mu}^t, \boldsymbol{\theta}) \boldsymbol{\mu}^t) \\
&= \frac{1}{2} \nabla_{\mathbf{x}}^2 : (\boldsymbol{\Sigma}(\mathbf{x}, \boldsymbol{\theta}) \boldsymbol{\Sigma}^T(\mathbf{x}, \boldsymbol{\theta}) \boldsymbol{\mu}^t)
\end{aligned}
\end{equation}

with initial condition $\boldsymbol{\mu}^0(\mathbf{x}, \boldsymbol{\theta}) = \boldsymbol{\mu}_{\text{init}}(\mathbf{x}, \boldsymbol{\theta})$ and boundary conditions:
\begin{equation}
\boldsymbol{\mu}^t(\mathbf{x}, \boldsymbol{\theta}) = 0, \quad \forall \mathbf{x} \in \partial\mathcal{S}, \quad t \in [0,T]
\end{equation}

\subsubsection{Spatial Discretization and Grid Structure}

The computational domain is discretized using a structured grid:
\begin{equation}
\label{eq:grid_discretization}
\begin{split}
\mathcal{G} = \{(s_k, d_j) :\;& s_k = k \cdot \Delta s,\;
d_j = d_{\min} + j \cdot \Delta d,\\
& k \in [0, K],\; j \in [0, J]\}.
\end{split}
\end{equation}
where $\Delta s = \frac{s_{\max}}{K}$ and $\Delta d = \frac{2d_{\max}}{J}$ represent the grid spacing parameters.

The discretized density field is represented as a tensor:
\begin{equation}
\label{eq:density_tensor}
\boldsymbol{\rho}^t \in \mathbb{R}^{J \times K \times |\boldsymbol{\Theta}|}, \quad \rho_{j,k,\ell}^t = \int_{\mathcal{C}_{j,k}} \boldsymbol{\mu}^t(\mathbf{x}, \boldsymbol{\theta}_\ell) \mathrm{d}\mathbf{x}
\end{equation}
where $\mathcal{C}_{j,k} = [s_k - \frac{\Delta s}{2}, s_k + \frac{\Delta s}{2}] \times [d_j - \frac{\Delta d}{2}, d_j + \frac{\Delta d}{2}]$ represents the grid cell.

The velocity field components are discretized as:
\begin{equation}
\label{eq:velocity_fields}
\begin{aligned}
\mathbf{V}_s^t &= \{\mathbf{V}_{s,j,k,\ell}^t\} \in \mathbb{R}^{J \times K \times |\boldsymbol{\Theta}|} \\
\mathbf{V}_d^t &= \{\mathbf{V}_{d,j,k,\ell}^t\} \in \mathbb{R}^{J \times K \times |\boldsymbol{\Theta}|}
\end{aligned}
\end{equation}

\subsubsection{Heterogeneous Interaction Kernel Design}

The mean field interaction is characterized by the style-dependent coupling function:
\begin{equation}
\label{eq:mean_field_coupling}
\boldsymbol{h}_i(\mathbf{x}, \boldsymbol{\mu}^t, \boldsymbol{\theta}_i) = \int_{\mathcal{S} \times \boldsymbol{\Theta}} \mathbf{K}(\mathbf{x}, \mathbf{y}, \boldsymbol{\theta}_i, \boldsymbol{\theta}) \boldsymbol{\mu}^t(\mathrm{d}\mathbf{y}, \mathrm{d}\boldsymbol{\theta})
\end{equation}

The heterogeneous interaction kernel $\mathbf{K}: \mathcal{S} \times \mathcal{S} \times \boldsymbol{\Theta} \times \boldsymbol{\Theta} \rightarrow \mathbb{R}^6$ is decomposed as:
\begin{equation}
\label{eq:interaction_kernel}
\mathbf{K}(\mathbf{x}_i, \mathbf{x}_j, \boldsymbol{\theta}_i, \boldsymbol{\theta}_j) = \boldsymbol{\Phi}(\|\mathbf{x}_i - \mathbf{x}_j\|, \boldsymbol{\theta}_i, \boldsymbol{\theta}_j) \cdot \mathbf{G}(\mathbf{x}_i, \mathbf{x}_j) \cdot \boldsymbol{\Psi}(\boldsymbol{\theta}_i, \boldsymbol{\theta}_j)
\end{equation}

The spatial influence kernel incorporates anisotropic interaction ranges:
\begin{equation}
\label{eq:spatial_kernel}
\boldsymbol{\Phi}(r, \boldsymbol{\theta}_i, \boldsymbol{\theta}_j) = \exp\left(-\frac{(s_i - s_j)^2}{2\sigma_s^2(\boldsymbol{\theta}_i, \boldsymbol{\theta}_j)}\right) \exp\left(-\frac{(d_i - d_j)^2}{2\sigma_d^2(\boldsymbol{\theta}_i, \boldsymbol{\theta}_j)}\right)
\end{equation}

where the interaction ranges are style-dependent:
\begin{equation}
\label{eq:interaction_ranges}
\begin{aligned}
\sigma_s^2(\boldsymbol{\theta}_i, \boldsymbol{\theta}_j) &= \omega_{\text{interact},i} \cdot \omega_{\text{interact},j} \cdot \sigma_{s,\text{base}}^2 \\
\sigma_d^2(\boldsymbol{\theta}_i, \boldsymbol{\theta}_j) &= \sqrt{\alpha_{\text{aggr},i} \cdot \alpha_{\text{aggr},j}} \cdot \sigma_{d,\text{base}}^2
\end{aligned}
\end{equation}

The geometric interaction matrix $\mathbf{G}(\mathbf{x}_i, \mathbf{x}_j) \in \mathbb{R}^{6 \times 6}$ accounts for relative positioning and motion:
\begin{equation}
\label{eq:geometric_matrix}
\mathbf{G}(\mathbf{x}_i, \mathbf{x}_j) = \begin{bmatrix}
\mathbf{G}_{\text{long}}(\mathbf{x}_i, \mathbf{x}_j) & \mathbf{0}_{3 \times 3} \\
\mathbf{0}_{3 \times 3} & \mathbf{G}_{\text{lat}}(\mathbf{x}_i, \mathbf{x}_j)
\end{bmatrix}
\end{equation}

where the longitudinal and lateral interaction submatrices are:
\begin{equation}
\label{eq:interaction_submatrices}
\begin{aligned}
\mathbf{G}_{\text{long}}(\mathbf{x}_i, \mathbf{x}_j) &= \begin{bmatrix}
\cos(\psi_{ij}) & -\sin(\psi_{ij}) & 0 \\
\sin(\psi_{ij}) & \cos(\psi_{ij}) & 0 \\
0 & 0 & \text{sign}(s_j - s_i)
\end{bmatrix} \\
\mathbf{G}_{\text{lat}}(\mathbf{x}_i, \mathbf{x}_j) &= \begin{bmatrix}
1 & 0 & \text{sign}(d_j - d_i) \\
0 & \cos(\Delta\phi_{ij}) & -\sin(\Delta\phi_{ij}) \\
0 & \sin(\Delta\phi_{ij}) & \cos(\Delta\phi_{ij})
\end{bmatrix}
\end{aligned}
\end{equation}

where $\psi_{ij} = \arctan2(d_j - d_i, s_j - s_i)$ and $\Delta\phi_{ij} = \phi_j - \phi_i$.

The style compatibility matrix $\boldsymbol{\Psi}(\boldsymbol{\theta}_i, \boldsymbol{\theta}_j) \in \mathbb{R}^{6 \times 6}$ captures behavioral compatibility:
\begin{equation}
\label{eq:style_compatibility}
\boldsymbol{\Psi}(\boldsymbol{\theta}_i, \boldsymbol{\theta}_j) = \exp\left(-\frac{\|\boldsymbol{\theta}_i - \boldsymbol{\theta}_j\|_{\mathbf{W}}^2}{\sigma_{\theta}^2}\right) \mathbf{I}_6 + \boldsymbol{\Lambda}(\boldsymbol{\theta}_i, \boldsymbol{\theta}_j)
\end{equation}

where $\mathbf{W} = \text{diag}(w_1, w_2, \ldots, w_7)$ is a weighting matrix for style parameters, and $\boldsymbol{\Lambda}$ captures cross-style interactions:
\begin{equation}
\label{eq:cross_style_matrix}
\resizebox{\linewidth}{!}{$
\boldsymbol{\Lambda}(\boldsymbol{\theta}_i, \boldsymbol{\theta}_j) =
\begin{bmatrix}
\lambda_{v}(\boldsymbol{\theta}_i, \boldsymbol{\theta}_j) & 0 & \lambda_{va}(\boldsymbol{\theta}_i, \boldsymbol{\theta}_j) & 0 & 0 & 0 \\
0 & \lambda_{a}(\boldsymbol{\theta}_i, \boldsymbol{\theta}_j) & 0 & 0 & 0 & 0 \\
\lambda_{av}(\boldsymbol{\theta}_i, \boldsymbol{\theta}_j) & 0 & \lambda_{aa}(\boldsymbol{\theta}_i, \boldsymbol{\theta}_j) & 0 & 0 & 0 \\
0 & 0 & 0 & \lambda_{d}(\boldsymbol{\theta}_i, \boldsymbol{\theta}_j) & 0 & \lambda_{da}(\boldsymbol{\theta}_i, \boldsymbol{\theta}_j) \\
0 & 0 & 0 & 0 & \lambda_{vd}(\boldsymbol{\theta}_i, \boldsymbol{\theta}_j) & 0 \\
0 & 0 & 0 & \lambda_{ad}(\boldsymbol{\theta}_i, \boldsymbol{\theta}_j) & 0 & \lambda_{add}(\boldsymbol{\theta}_i, \boldsymbol{\theta}_j)
\end{bmatrix}
$}
\end{equation}

\subsection{Safety-Critical Constraint Manifolds and Risk Assessment}
\label{sec3c}

The safety constraint set for the challenging surrounding scenario is formulated as the intersection of multiple constraint manifolds:
\begin{equation}
\label{eq:safety_constraint_set}
\mathcal{S}_{\text{safe}}^t = \bigcap_{i=1}^N \left(\mathcal{M}_{\text{coll},i}^t \cap \mathcal{M}_{\text{boundary},i}^t \cap \mathcal{M}_{\text{kinematic},i}^t \cap \mathcal{M}_{\text{comfort},i}^t\right)
\end{equation}

\subsubsection{Multi-Scale Collision Avoidance Manifold}

The collision avoidance constraint manifold for vehicle $i$ incorporates probabilistic safety guarantees:
\begin{equation}
\label{eq:collision_manifold}
\mathcal{M}_{\text{coll},i}^t = \left\{\boldsymbol{\mu}^t \in \mathcal{P}(\mathcal{S} \times \boldsymbol{\Theta}) : \mathbb{P}_{\boldsymbol{\mu}^t}[\mathcal{E}_{\text{coll},i}^t] \leq \epsilon_{\text{coll}}\right\}
\end{equation}

where the collision event is defined by the multi-scale unsafe region:
\begin{equation}
\label{eq:collision_event}
\begin{split}
\mathcal{E}_{\text{coll},i}^t
= \bigl\{(\mathbf{x}, \boldsymbol{\theta}) \in \mathcal{S} \times \boldsymbol{\Theta} :\;
& \exists\, \tau \in [t,\, t+T_{\text{pred}}], \\
 d_{\text{min}}(\mathbf{X}_i^{\tau}, \mathbf{x}^{\tau})
\le d_{\text{safe}}(\mathbf{X}_i^{\tau}, \mathbf{x}^{\tau}, \boldsymbol{\theta}_i, \boldsymbol{\theta})
\bigr\}.
\end{split}
\end{equation}

The dynamic safety distance incorporates relative motion, style compatibility, and temporal prediction:
\begin{equation}
\label{eq:dynamic_safety_distance}
\begin{aligned}
d_{\text{safe}}(\mathbf{X}_i^t, \mathbf{x}^t, \boldsymbol{\theta}_i, \boldsymbol{\theta}) &= d_{\text{base}} \cdot \boldsymbol{\Xi}(\Delta v_{ij}, \Delta \phi_{ij}, \boldsymbol{\theta}_i, \boldsymbol{\theta}) \cdot \boldsymbol{\Upsilon}(t, T_{\text{pred}}) \\
&\quad \cdot \boldsymbol{\Omega}(\mathcal{R}_{\text{surrounding}})
\end{aligned}
\end{equation}

where the safety amplification factors are:
\begin{equation}
\label{eq:safety_amplification}
\begin{aligned}
\boldsymbol{\Xi}(\Delta v, \Delta \phi, \boldsymbol{\theta}_i, \boldsymbol{\theta}) &= \prod_{k=1}^5 \left(1 + \beta_k \frac{f_k(\Delta v, \Delta \phi, \boldsymbol{\theta}_i, \boldsymbol{\theta})}{g_k}\right) \\
f_1 &= |\Delta v|, \quad g_1 = v_{\text{ref}}, \quad \beta_1 = \omega_{\text{interact},i} \\
f_2 &= |\Delta \phi|, \quad g_2 = \pi, \quad \beta_2 = \alpha_{\text{aggr},i} \\
f_3 &= \|\boldsymbol{\theta}_i - \boldsymbol{\theta}\|, \quad g_3 = \|\boldsymbol{\theta}\|_{\max}, \quad \beta_3 = 1 \\
f_4 &= |a_{s,i} - a_{s,j}|, \quad g_4 = a_{\max}, \quad \beta_4 = \kappa_{\text{safe},i} \\
f_5 &= |v_{d,i} - v_{d,j}|, \quad g_5 = v_{d,\max}, \quad \beta_5 = \alpha_{\text{aggr},i}
\end{aligned}
\end{equation}

The temporal prediction factor accounts for uncertainty growth:
\begin{equation}
\label{eq:temporal_factor}
\boldsymbol{\Upsilon}(t, T_{\text{pred}}) = 1 + \gamma_{\text{pred}} \left(\frac{t}{T_{\text{pred}}}\right)^2 + \delta_{\text{pred}} \exp\left(-\frac{(T_{\text{pred}} - t)^2}{\sigma_{\text{pred}}^2}\right)
\end{equation}

The surrounding scenario factor captures the challenging multi-vehicle environment:
\begin{equation}
\label{eq:surrounding_factor}
\boldsymbol{\Omega}(\mathcal{R}_{\text{surrounding}}) = 1 + \zeta_{\text{density}} \rho_{\text{local}}(\mathbf{X}_i^t) + \zeta_{\text{aggr}} \bar{\alpha}_{\text{local}}(\mathbf{X}_i^t)
\end{equation}

where $\rho_{\text{local}}(\mathbf{x})$ is the local density and $\bar{\alpha}_{\text{local}}(\mathbf{x})$ is the average local aggressiveness.

\subsubsection{Hierarchical Risk Assessment Functions}

The instantaneous collision risk incorporates multi-modal uncertainty:
\begin{equation}
\label{eq:instantaneous_risk}
\begin{aligned}
R_{\text{inst}}(\mathbf{X}_i^t, \boldsymbol{\mu}^t, \boldsymbol{\theta}_i) &= \int_{\mathcal{S} \times \boldsymbol{\Theta}} \boldsymbol{\omega}(\mathbf{X}_i^t, \mathbf{x}, \boldsymbol{\theta}_i, \boldsymbol{\theta}) \boldsymbol{\mu}^t(\mathrm{d}\mathbf{x}, \mathrm{d}\boldsymbol{\theta}) \\
&\quad + \sum_{j \neq i} \mathcal{W}_{\text{discrete}}(\mathbf{X}_i^t, \mathbf{X}_j^t, \boldsymbol{\theta}_i, \boldsymbol{\theta}_j)
\end{aligned}
\end{equation}

The continuous risk kernel captures probabilistic collision potential:
\begin{equation}
\label{eq:risk_kernel}
\begin{aligned}
\boldsymbol{\omega}(\mathbf{X}_i^t, \mathbf{x}, \boldsymbol{\theta}_i, \boldsymbol{\theta}) &= \exp\left(-\frac{d(\mathbf{X}_i^t, \mathbf{x})^\eta}{d_{\text{safe}}(\mathbf{X}_i^t, \mathbf{x}, \boldsymbol{\theta}_i, \boldsymbol{\theta})^\eta}\right) \\
&\quad \cdot \left(1 + \lambda_v \frac{|\Delta v|}{\max(v_i, v_j) + \epsilon_v}\right) \\
&\quad \cdot \left(1 + \lambda_\phi \frac{|\Delta \phi|}{\pi}\right)
\end{aligned}
\end{equation}

where $\eta > 2$ provides sharp risk gradients near collision boundaries.

The discrete vehicle interaction weight accounts for direct vehicle-to-vehicle risks:
\begin{equation}
\label{eq:discrete_interaction}
\begin{split}
\mathcal{W}_{\text{discrete}}(\mathbf{X}_i^t, \mathbf{X}_j^t, \boldsymbol{\theta}_i, \boldsymbol{\theta}_j)
&= \frac{\exp\!\bigl(-d(\mathbf{X}_i^t, \mathbf{X}_j^t)/\sigma_{\text{discrete}}\bigr)}
{1 + \tau_{\text{react},i}\, \tau_{\text{react},j}} \\
&\quad \cdot \Gamma(\boldsymbol{\theta}_i, \boldsymbol{\theta}_j).
\end{split}
\end{equation}
where $\Gamma(\boldsymbol{\theta}_i, \boldsymbol{\theta}_j)$ is the behavioral compatibility factor:
\begin{equation}
\label{eq:behavioral_compatibility}
\Gamma(\boldsymbol{\theta}_i, \boldsymbol{\theta}_j) = \begin{cases}
2.0, & \text{if both aggressive} \\
1.5, & \text{if mixed aggressive/conservative} \\
1.0, & \text{if both conservative} \\
0.8, & \text{if ego vehicle}
\end{cases}
\end{equation}

The temporal risk function over prediction horizon $T$ incorporates uncertainty propagation:
\begin{equation}
\label{eq:temporal_risk}
\begin{aligned}
R_T(\mathbf{X}_i^t, \boldsymbol{\mu}^t, \boldsymbol{\theta}_i) &= \frac{1}{T} \sum_{\tau=1}^T \frac{1}{1+\tau^\nu} \int_{\mathcal{S} \times \boldsymbol{\Theta}} \varrho\left(d(\hat{\mathbf{X}}_i^{t+\tau}, \hat{\mathbf{x}}^{t+\tau}), d_{\text{safe}}\right) \\
&\quad \cdot \hat{\boldsymbol{\mu}}^{t+\tau}(\mathrm{d}\mathbf{x}, \mathrm{d}\boldsymbol{\theta}) \\
&\quad \cdot \Psi_{\text{uncertainty}}(\tau, \boldsymbol{\theta}_i, \boldsymbol{\theta})
\end{aligned}
\end{equation}

where $\nu \geq 1$ controls temporal discounting, and the uncertainty propagation factor is:
\begin{equation}
\label{eq:uncertainty_propagation}
\begin{split}
\Psi_{\text{uncertainty}}(\tau, \boldsymbol{\theta}_i, \boldsymbol{\theta})
&= \exp\!\Bigl(
\frac{\tau \cdot \|\boldsymbol{\Sigma}(\boldsymbol{\theta})\|_F}{\sigma_{\text{base}}}
\Bigr) \\
&\quad \cdot \Bigl(1 + \kappa_{\text{hetero}} \|\boldsymbol{\theta}_i - \boldsymbol{\theta}\|^2\Bigr).
\end{split}
\end{equation}

The risk function $\varrho(d, d_{\text{safe}})$ employs a smooth approximation to the collision indicator:
\begin{equation}
\label{eq:smooth_risk_function}
\varrho(d, d_{\text{safe}}) = \begin{cases}
1, & \text{if } d \leq d_{\text{safe}}/2 \\[4pt]
\left(1 - \frac{2d - d_{\text{safe}}}{d_{\text{safe}}}\right)^\xi, & \text{if } d_{\text{safe}}/2 < d \leq d_{\text{safe}} \\[4pt]
0, & \text{if } d > d_{\text{safe}}
\end{cases}
\end{equation}

where $\xi \geq 2$ provides smooth risk transitions.

\subsection{Optimal Control Problem Formulation and HJB System}
\label{sec3d}

Each vehicle $i$ with driving style $\boldsymbol{\theta}_i$ solves the following mean field optimal control problem over the finite horizon $[0,T]$:
\begin{equation}
\label{eq:mfg_optimal_control}
\begin{aligned}
V_i(\mathbf{x}, t, \boldsymbol{\mu}^t, \boldsymbol{\theta}_i) &= \sup_{\mathbf{u}_i \in \mathcal{U}_i} \mathbb{E}\bigg[\int_t^T L_i(\mathbf{X}_i^s, \mathbf{u}_i^s, \boldsymbol{\mu}^s, \boldsymbol{\theta}_i) \mathrm{d}s \\
&\quad + \Phi_i(\mathbf{X}_i^T, \boldsymbol{\mu}^T, \boldsymbol{\theta}_i) \bigg| \mathbf{X}_i^t = \mathbf{x}\bigg]
\end{aligned}
\end{equation}

subject to the stochastic dynamics \eqref{eq:sde_heterogeneous} and safety constraints \eqref{eq:safety_constraint_set}.

\subsubsection{Style-Dependent Cost Function Design}

The instantaneous cost function incorporates heterogeneous preferences and safety requirements:
\begin{equation}
\label{eq:instantaneous_cost}
\begin{aligned}
L_i(\mathbf{x}, \mathbf{u}, \boldsymbol{\mu}, \boldsymbol{\theta}_i) &= \ell_{\text{track}}(\mathbf{x}, \mathbf{u}, \boldsymbol{\theta}_i) + \ell_{\text{safety}}(\mathbf{x}, \boldsymbol{\mu}, \boldsymbol{\theta}_i) \\
&\quad + \ell_{\text{comfort}}(\mathbf{u}, \boldsymbol{\theta}_i) + \ell_{\text{fuel}}(\mathbf{u}, \boldsymbol{\theta}_i) \\
&\quad + \ell_{\text{lane}}(\mathbf{x}, \mathbf{u}, \boldsymbol{\mu}, \boldsymbol{\theta}_i)
\end{aligned}
\end{equation}

The tracking cost reflects style-specific target preferences:
\begin{equation}
\label{eq:tracking_cost}
\begin{aligned}
\ell_{\text{track}}(\mathbf{x}, \mathbf{u}, \boldsymbol{\theta}_i) &= \|\mathbf{x} - \mathbf{x}_{\text{ref}}(\boldsymbol{\theta}_i)\|_{\mathbf{Q}_i(\boldsymbol{\theta}_i)}^2 + \|\mathbf{u} - \mathbf{u}_{\text{ref}}(\boldsymbol{\theta}_i)\|_{\mathbf{R}_i(\boldsymbol{\theta}_i)}^2 \\
&\quad + \|\dot{\mathbf{u}} - \dot{\mathbf{u}}_{\text{ref}}(\boldsymbol{\theta}_i)\|_{\mathbf{S}_i(\boldsymbol{\theta}_i)}^2
\end{aligned}
\end{equation}

where the style-dependent weighting matrices are:
\begin{equation}
\label{eq:style_dependent_weights}
\begin{aligned}
\mathbf{Q}_i(\boldsymbol{\theta}_i) &= \text{diag}(q_{s,i}, q_{v_s,i}, q_{a_s,i}, q_{d,i}, q_{v_d,i}, q_{a_d,i}) \\
\mathbf{R}_i(\boldsymbol{\theta}_i) &= \text{diag}(r_{a,i}, r_{d,i}, r_{\delta,i}) \\
\mathbf{S}_i(\boldsymbol{\theta}_i) &= \text{diag}(s_{a,i}, s_{d,i})
\end{aligned}
\end{equation}

The reference trajectories adapt to driving styles:
\begin{equation}
\label{eq:reference_trajectories}
\begin{aligned}
\mathbf{x}_{\text{ref}}(\boldsymbol{\theta}_i) &= \begin{bmatrix} s_{\text{ref},i}(t) \\ v_{\text{des},i} \\ 0 \\ d_{\text{target},i}(t) \\ 0 \\ 0 \end{bmatrix} \\
\mathbf{u}_{\text{ref}}(\boldsymbol{\theta}_i) &= \begin{bmatrix} 0 \\ 0 \\ \delta_{\text{plan},i}(t) \end{bmatrix}
\end{aligned}
\end{equation}

The mean field safety cost integrates both continuous and discrete interaction risks:
\begin{equation}
\label{eq:safety_cost}
\begin{aligned}
\ell_{\text{safety}}(\mathbf{x}, \boldsymbol{\mu}, \boldsymbol{\theta}_i) &= \omega_{\text{interact},i} \int_{\mathcal{S} \times \boldsymbol{\Theta}} \boldsymbol{\omega}(\mathbf{x}, \mathbf{y}, \boldsymbol{\theta}_i, \boldsymbol{\theta}) \boldsymbol{\mu}(\mathrm{d}\mathbf{y}, \mathrm{d}\boldsymbol{\theta}) \\
&\quad + \kappa_{\text{safe},i} \varphi_{\text{boundary}}(d_{\text{c2b}}(\mathbf{x})) \\
&\quad + \alpha_{\text{aggr},i} \sum_{j \neq i} \mathcal{W}_{\text{discrete}}(\mathbf{x}, \mathbf{X}_j, \boldsymbol{\theta}_i, \boldsymbol{\theta}_j)
\end{aligned}
\end{equation}

where the boundary penalty function is:
\begin{equation}
\label{eq:boundary_penalty}
\varphi_{\text{boundary}}(d) = \begin{cases}
+\infty, & \text{if } d \leq 0 \\[4pt]
\beta_{\text{boundary}} \left(\frac{d_{\min}}{d}\right)^2, & \text{if } 0 < d \leq d_{\text{safe}} \\[4pt]
0, & \text{if } d > d_{\text{safe}}
\end{cases}
\end{equation}

The comfort cost reflects style-specific tolerance levels:
\begin{equation}
\label{eq:comfort_cost}
\begin{aligned}
\ell_{\text{comfort}}(\mathbf{u}, \boldsymbol{\theta}_i) &= w_{\text{jerk},i} \|\dot{\mathbf{u}}\|^2 + w_{\text{lateral},i} |u_{d,i}|^2 \\
&\quad + w_{\text{aggr},i} \mathbb{I}_{\{\|\mathbf{u}\| > u_{\text{comfort},i}(\boldsymbol{\theta}_i)\}} \\
&\quad + w_{\text{centripetal},i} \left|\frac{(v_s)^2}{R_{\text{curvature}}} - a_{\text{cent},\text{des},i}\right|^2
\end{aligned}
\end{equation}

The lane-changing cost captures mandatory scenario requirements:
\begin{equation}
\label{eq:lane_changing_cost}
\begin{aligned}
\ell_{\text{lane}}(\mathbf{x}, \mathbf{u}, \boldsymbol{\mu}, \boldsymbol{\theta}_i) &= w_{\text{mandatory},i} \mathbb{I}_{\{|d - d_{\text{target}}| > \epsilon_{\text{lane}}\}} \\
&\quad + w_{\text{transition},i} |\delta_i|^2 \cdot \mathcal{M}_{\text{safety}}(\mathbf{x}, \boldsymbol{\mu}, \boldsymbol{\theta}_i) \\
&\quad + w_{\text{smoothness},i} |\dot{d}|^2
\end{aligned}
\end{equation}

where the safety margin for lane transitions is:
\begin{equation}
\label{eq:lane_safety_margin}
\mathcal{M}_{\text{safety}}(\mathbf{x}, \boldsymbol{\mu}, \boldsymbol{\theta}_i) = \exp\left(\int_{\mathcal{S} \times \boldsymbol{\Theta}} \frac{\boldsymbol{\omega}(\mathbf{x}, \mathbf{y}, \boldsymbol{\theta}_i, \boldsymbol{\theta})}{d_{\text{safe}}(\mathbf{x}, \mathbf{y}, \boldsymbol{\theta}_i, \boldsymbol{\theta})} \boldsymbol{\mu}(\mathrm{d}\mathbf{y}, \mathrm{d}\boldsymbol{\theta})\right)
\end{equation}

\subsubsection{Hamilton-Jacobi-Bellman System}

The value function $V_i(\mathbf{x}, t, \boldsymbol{\mu}^t, \boldsymbol{\theta}_i)$ satisfies the coupled HJB system:
\begin{equation}
\label{eq:hjb_system}
\begin{aligned}
-\frac{\partial V_i}{\partial t} &= \inf_{\mathbf{u} \in \mathcal{U}_i} \bigg\{L_i(\mathbf{x}, \mathbf{u}, \boldsymbol{\mu}^t, \boldsymbol{\theta}_i) + \langle\nabla_{\mathbf{x}} V_i, \boldsymbol{f}_i(\mathbf{x}, \mathbf{u}, \boldsymbol{\mu}^t, \boldsymbol{\theta}_i)\rangle \\
&\quad + \frac{1}{2}\text{tr}[\boldsymbol{\Sigma}_i(\mathbf{x}, \boldsymbol{\theta}_i) \nabla_{\mathbf{x}}^2 V_i \boldsymbol{\Sigma}_i^T(\mathbf{x}, \boldsymbol{\theta}_i)] \\
&\quad + \langle\frac{\delta V_i}{\delta \boldsymbol{\mu}}, \mathcal{F}[\boldsymbol{\mu}^t, \{\mathbf{u}_j^*\}_{j=1}^N]\rangle\bigg\}
\end{aligned}
\end{equation}

with terminal condition:
\begin{equation}
V_i(\mathbf{x}, T, \boldsymbol{\mu}^T, \boldsymbol{\theta}_i) = \Phi_i(\mathbf{x}, \boldsymbol{\mu}^T, \boldsymbol{\theta}_i)
\end{equation}

The optimal control policy is characterized by the first-order condition:
\begin{equation}
\label{eq:optimal_control_policy}
\begin{split}
\mathbf{u}_i^*(\mathbf{x}, t, \boldsymbol{\mu}^t, \boldsymbol{\theta}_i)
= \arg\inf_{\mathbf{u} \in \mathcal{U}_i} \Bigl\{
& L_i(\mathbf{x}, \mathbf{u}, \boldsymbol{\mu}^t, \boldsymbol{\theta}_i) \\
& {}+ \langle \nabla_{\mathbf{x}} V_i,\,
\boldsymbol{f}_i(\mathbf{x}, \mathbf{u}, \boldsymbol{\mu}^t, \boldsymbol{\theta}_i) \rangle
\Bigr\}.
\end{split}
\end{equation}

\subsection{Stochastic Path Planning via Enhanced Candidate Generation}
\label{sec3e}

For the challenging surrounding scenario requiring mandatory lane changes, we develop a stochastic path planning framework that generates and evaluates candidate trajectories under the MFG equilibrium.

\subsubsection{Candidate Path Space Definition}

The admissible path space for mandatory lane-changing scenarios is defined as:
\begin{equation}
\label{eq:candidate_path_space}
\begin{split}
\mathcal{P}_{\text{candidate}} =
\{\boldsymbol{\gamma}:[0,T]\to\mathcal{S}\mid\;
& \boldsymbol{\gamma}(0)=\mathbf{x}_0,\\
& |\boldsymbol{\gamma}(T)-\mathbf{x}_{\text{target}}|\le \epsilon_{\text{target}},\\
& \text{$\boldsymbol{\gamma}$ satisfies lane-change constraints},\\
& \|\dot{\boldsymbol{\gamma}}(t)\|\le v_{\max},\\
& \|\ddot{\boldsymbol{\gamma}}(t)\|\le a_{\max}\,\}.
\end{split}
\end{equation}

Each candidate path $\boldsymbol{\gamma}_k \in \mathcal{P}_{\text{candidate}}$ is parameterized using a hierarchical representation:
\begin{equation}
\label{eq:path_parameterization}
\boldsymbol{\gamma}_k(t) = \boldsymbol{\Gamma}(\mathbf{p}_k^{\text{global}}, \mathbf{p}_k^{\text{local}}, \mathbf{p}_k^{\text{temporal}}, t)
\end{equation}

where the parameter vectors capture different aspects of the trajectory:
\begin{equation}
\label{eq:parameter_vectors}
\begin{aligned}
\mathbf{p}_k^{\text{global}} &= [s_{\text{start}}, s_{\text{end}}, d_{\text{init}}, d_{\text{target}}, v_{\text{target}}]^T \\
\mathbf{p}_k^{\text{local}} &= [s_{\text{lc,start}}, s_{\text{lc,end}}, \phi_{\text{transition}}, \kappa_{\text{smooth}}]^T \\
\mathbf{p}_k^{\text{temporal}} &= [t_{\text{start}}, t_{\text{end}}, \tau_{\text{transition}}, \sigma_{\text{timing}}]^T
\end{aligned}
\end{equation}

\subsubsection{Enhanced Lane-Changing Trajectory Generation}

For mandatory lane-changing maneuvers, the lateral trajectory follows an enhanced parameterization:
\begin{equation}
\label{eq:lateral_trajectory}
d_k(s) =
\begin{cases}
d_{\text{init}}, & \text{(i)} \\[4pt]
d_{\text{init}} + (d_{\text{target}} - d_{\text{init}})\,
\Upsilon_k\!\left(\dfrac{s - s_{\text{start},k}}{s_{\text{end},k} - s_{\text{start},k}}\right), & \text{(ii)} \\[4pt]
d_{\text{target}}, & \text{(iii)}
\end{cases}
\end{equation}
\noindent\textbf{where} (i) $s \le s_{\text{start},k}$; (ii) $s_{\text{start},k} < s < s_{\text{end},k}$; (iii) $s \ge s_{\text{end},k}$.

The transition function $\Upsilon_k(\xi)$ incorporates style-dependent smoothness:
\begin{equation}
\label{eq:transition_function}
\Upsilon_k(\xi) = \frac{1}{2}\left(1 - \cos(\pi \xi^\beta_k)\right) + \gamma_k \xi^2(1-\xi)^2 \sin(2\pi n_k \xi)
\end{equation}

where $\beta_k = f(\alpha_{\text{aggr}})$ controls transition sharpness and $\gamma_k, n_k$ add style-specific variations.

The longitudinal velocity profile adapts to the lane-changing maneuver:
\begin{equation}
\label{eq:velocity_profile}
v_{s,k}(s) = v_{\text{des}} \cdot \left(1 - \epsilon_{\text{slowdown}} \cdot \exp\left(-\frac{(s - s_{\text{mid},k})^2}{2\sigma_{\text{slowdown}}^2}\right)\right) \cdot \Psi_{\text{style}}(\boldsymbol{\theta})
\end{equation}

where $\Psi_{\text{style}}(\boldsymbol{\theta})$ reflects driving style preferences:
\begin{equation}
\label{eq:style_velocity_factor}
\Psi_{\text{style}}(\boldsymbol{\theta}) = \begin{cases}
1.2, & \text{if super-aggressive} \\
1.1, & \text{if aggressive/competitive} \\
0.8, & \text{if conservative} \\
1.0, & \text{otherwise}
\end{cases}
\end{equation}

\subsubsection{Multi-Objective Path Evaluation Under MFG Equilibrium}

The path evaluation function integrates MFG-based field costs, safety assessments, and style compatibility:
\begin{equation}
\label{eq:path_evaluation}
\begin{aligned}
J_k(\boldsymbol{\gamma}_k, \boldsymbol{\mu}^t, \boldsymbol{\theta}_{\text{ego}}) &= w_1 J_{\text{field}}(\boldsymbol{\gamma}_k, \boldsymbol{\mu}^t) + w_2 J_{\text{safety}}(\boldsymbol{\gamma}_k, \boldsymbol{\mu}^t) \\
&\quad + w_3 J_{\text{dynamics}}(\boldsymbol{\gamma}_k) + w_4 J_{\text{style}}(\boldsymbol{\gamma}_k, \boldsymbol{\theta}_{\text{ego}}) \\
&\quad + w_5 J_{\text{mandatory}}(\boldsymbol{\gamma}_k)
\end{aligned}
\end{equation}

The MFG field interaction cost captures continuous field effects:
\begin{equation}
\label{eq:field_interaction_cost}
\begin{aligned}
J_{\text{field}}(\boldsymbol{\gamma}_k, \boldsymbol{\mu}^t) &= \int_0^T \int_{\mathcal{S} \times \boldsymbol{\Theta}} \boldsymbol{\omega}(\boldsymbol{\gamma}_k(t), \mathbf{x}, \boldsymbol{\theta}_{\text{ego}}, \boldsymbol{\theta}) \boldsymbol{\mu}^t(\mathrm{d}\mathbf{x}, \mathrm{d}\boldsymbol{\theta}) \mathrm{d}t \\
&\quad + \int_0^T \nabla_{\mathbf{x}} \cdot \mathbf{V}^t(\boldsymbol{\gamma}_k(t)) \mathrm{d}t
\end{aligned}
\end{equation}

The enhanced collision risk cost employs multiple risk scales:
\begin{equation}
\label{eq:collision_risk_cost}
\begin{aligned}
J_{\text{safety}}(\boldsymbol{\gamma}_k, \boldsymbol{\mu}^t) &= \sum_{j=2}^N \int_0^T \left[\exp\left(-\frac{d(\boldsymbol{\gamma}_k(t), \mathbf{X}_j^t)^4}{d_{\text{safe}}^4}\right) \right. \\
&\quad \left. + \zeta_{\text{ttc}} \mathbb{I}_{\{\text{TTC}(\boldsymbol{\gamma}_k(t), \mathbf{X}_j^t) < T_{\text{critical}}\}}\right] \mathrm{d}t \\
&\quad + \int_0^T \mathcal{R}_{\text{boundary}}(\boldsymbol{\gamma}_k(t)) \mathrm{d}t
\end{aligned}
\end{equation}

where TTC (Time-to-Collision) is computed as:
\begin{equation}
\label{eq:ttc_computation}
\text{TTC}(\mathbf{x}_i, \mathbf{x}_j) = \begin{cases}
\frac{d(\mathbf{x}_i, \mathbf{x}_j)}{|\mathbf{v}_{\text{rel}} \cdot \hat{\mathbf{n}}|}, & \text{if } \mathbf{v}_{\text{rel}} \cdot \hat{\mathbf{n}} < 0 \\[4pt]
+\infty, & \text{otherwise}
\end{cases}
\end{equation}

where $\mathbf{v}_{\text{rel}} = \mathbf{v}_i - \mathbf{v}_j$ and $\hat{\mathbf{n}}$ is the unit vector from $\mathbf{x}_i$ to $\mathbf{x}_j$.

The dynamics cost ensures kinematic feasibility:
\begin{equation}
\label{eq:dynamics_cost}
\begin{aligned}
J_{\text{dynamics}}(\boldsymbol{\gamma}_k)
&= \int_0^T \Bigl[
\rho_v \bigl(\|\dot{\boldsymbol{\gamma}}_k(t)\| - v_{\max}\bigr)_+^2 \\
&\quad + \rho_a \bigl(\|\ddot{\boldsymbol{\gamma}}_k(t)\| - a_{\max}\bigr)_+^2 \\
&\quad + \rho_j \bigl(\|\dddot{\boldsymbol{\gamma}}_k(t)\| - j_{\max}\bigr)_+^2 \\
&\quad + \rho_\kappa \bigl(\kappa(\boldsymbol{\gamma}_k(t)) - \kappa_{\max}\bigr)_+^2
\Bigr]\, \mathrm{d}t
\end{aligned}
\end{equation}
where $(\cdot)_+ = \max(0, \cdot)$ and $\kappa(\boldsymbol{\gamma})$ is the path curvature.

\subsection{Mean Field Equilibrium Analysis and Convergence Guarantees}
\label{sec3f}

\subsubsection{Fixed Point Characterization}

A mean field equilibrium $(\boldsymbol{\mu}^*, \{\mathbf{u}_i^*\}_{i=1}^N)$ for the heterogeneous system satisfies the coupled fixed point conditions:
\begin{equation}
\label{eq:fixed_point_conditions}
\begin{aligned}
\boldsymbol{\mu}^* &= \mathcal{T}[\{\mathbf{u}_i^*\}_{i=1}^N, \{\boldsymbol{\theta}_i\}_{i=1}^N](\boldsymbol{\mu}^*) \\
\mathbf{u}_i^* &= \mathcal{G}_i[\boldsymbol{\mu}^*, \boldsymbol{\theta}_i](\mathbf{u}_i^*), \quad \forall i \in \{1, \ldots, N\}
\end{aligned}
\end{equation}

where $\mathcal{T}$ is the measure propagation operator defined by the Fokker-Planck evolution \eqref{eq:fokker_planck_heterogeneous}, and $\mathcal{G}_i$ is the optimal control operator derived from the HJB system \eqref{eq:hjb_system}.

\subsubsection{Existence and Uniqueness Theorems}

\begin{theorem}[Existence of Heterogeneous MFG Equilibrium]
\label{thm:existence}
Consider the heterogeneous MFG system defined by \eqref{eq:sde_heterogeneous}-\eqref{eq:hjb_system}. Under the following conditions:
\begin{enumerate}
\item \textbf{Lipschitz Continuity}: The drift $\boldsymbol{f}_i(\cdot, \cdot, \boldsymbol{\mu}, \boldsymbol{\theta}_i)$ and cost $L_i(\cdot, \cdot, \boldsymbol{\mu}, \boldsymbol{\theta}_i)$ are uniformly Lipschitz continuous in $(\mathbf{x}, \mathbf{u})$ with respect to the Wasserstein metric on $\boldsymbol{\mu}$.

\item \textbf{Interaction Monotonicity}: The interaction coupling satisfies
\begin{equation}
\label{eq:interaction_monotonicity}
\begin{aligned}
\langle \boldsymbol{h}_i(\mathbf{x}, \boldsymbol{\mu}_1, \boldsymbol{\theta}_i) - \boldsymbol{h}_i(\mathbf{x}, \boldsymbol{\mu}_2, \boldsymbol{\theta}_i),\,
& \boldsymbol{\mu}_1 - \boldsymbol{\mu}_2 \rangle_{W_2} \\
& \ge \alpha_{\text{mono}}\, W_2^2(\boldsymbol{\mu}_1, \boldsymbol{\mu}_2)
\end{aligned}
\end{equation}

for some $\alpha_{\text{mono}} > 0$, where $W_2$ denotes the 2-Wasserstein distance.

\item \textbf{Coercivity}: The cost function satisfies
\begin{equation}
L_i(\mathbf{x}, \mathbf{u}, \boldsymbol{\mu}, \boldsymbol{\theta}_i) \geq c_1 \|\mathbf{u}\|^2 - c_2(1 + \|\mathbf{x}\|^2)
\end{equation}
for constants $c_1 > 0, c_2 \geq 0$.

\item \textbf{Style Boundedness}: The driving style parameter space $\boldsymbol{\Theta}$ is compact and $\inf_{\boldsymbol{\theta} \in \boldsymbol{\Theta}} \omega_{\text{interact}}(\boldsymbol{\theta}) > 0$.
\end{enumerate}

Then there exists at least one mean field equilibrium $(\boldsymbol{\mu}^*, \{\mathbf{u}_i^*\}_{i=1}^N)$ in the space $\mathcal{P}_2(\mathcal{S} \times \boldsymbol{\Theta}) \times \prod_{i=1}^N L^2([0,T]; \mathcal{U}_i)$.
\end{theorem}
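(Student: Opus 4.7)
The plan is to recast the equilibrium problem as a fixed point of a composite operator on the space of mean field measures. I would define the best-response map $\mathcal{G}$ that, given a candidate mean field trajectory $\boldsymbol{\mu}$, returns the tuple $\{\mathbf{u}_i^*\}$ solving the HJB system \eqref{eq:hjb_system}, and compose it with the forward propagator $\mathcal{T}$ defined through the Fokker--Planck equation \eqref{eq:fokker_planck_heterogeneous} to obtain $\Phi(\boldsymbol{\mu}) := \mathcal{T}[\mathcal{G}[\boldsymbol{\mu}]](\boldsymbol{\mu})$. A fixed point of $\Phi$ is precisely an equilibrium in the sense of \eqref{eq:fixed_point_conditions}, so the task reduces to exhibiting one.

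First I would construct a convex set $\mathcal{K} \subset C([0,T]; \mathcal{P}_2(\mathcal{S}\times\boldsymbol{\Theta}))$ that is compact in the $W_2$ metric and invariant under $\Phi$. Compactness of $\boldsymbol{\Theta}$ from assumption (4), combined with the state constraints \eqref{eq:state_space}, gives a bounded ambient set; the coercivity bound in (3) together with Itô energy estimates for \eqref{eq:sde_heterogeneous} yields uniform second-moment control on the flow. Prokhorov's theorem then delivers tightness and hence $W_2$-precompactness of $\mathcal{K}$. Standard parabolic regularity for \eqref{eq:fokker_planck_heterogeneous}, combined with the Lipschitz bound in (1), confirms that $\Phi(\mathcal{K}) \subset \mathcal{K}$ once the second-moment radius is chosen large enough.

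Next I would establish the upper-hemicontinuity of $\Phi$. For the best-response component, Lipschitzness of the drift and cost (assumption 1) plus coercivity (3) guarantee that the coupled HJB system admits a viscosity solution depending $W_2$-continuously on $\boldsymbol{\mu}$, giving continuity of the optimal feedback up to the familiar subtlety introduced by the discrete lane-change variable $\delta_i \in \{-1,0,1\}$. For the propagator, I would invoke classical stability of linear Fokker--Planck flows: Lipschitz drifts converging in $L^2$ produce solutions converging in $C([0,T]; \mathcal{P}_2)$. With $\Phi$ shown to be upper-hemicontinuous with nonempty convex compact values on $\mathcal{K}$, the Kakutani--Glicksberg--Fan theorem yields a fixed point.

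The hard part will be handling the discrete control $\delta_i$ together with the anisotropic style-dependent interaction kernel \eqref{eq:interaction_kernel}. Point-valued best responses can jump across lane-change boundaries, so continuity of $\mathcal{G}$ in the single-valued sense generally fails, and a direct Schauder argument does not suffice. My resolution is to pass to a relaxed control formulation in which $\delta_i$ takes values in the probability simplex over $\{-1,0,1\}$; the interaction monotonicity (2) then ensures that the relaxed best-response correspondence has closed graph with convex-valued images, so Kakutani applies. Strict positivity $\inf_{\boldsymbol{\theta}} \omega_{\text{interact}}(\boldsymbol{\theta}) > 0$ from (4) prevents degeneracy of the interaction kernel across style classes, and a standard purification argument then recovers an equilibrium of the original system with Borel-measurable selections of $\delta_i$, completing the construction.
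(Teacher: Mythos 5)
Your proposal is sound at the level of a sketch and shares the same skeleton as the paper's argument: both reduce the equilibrium to a fixed point of the forward Fokker--Planck propagator $\mathcal{T}$ coupled with the HJB best-response operator $\mathcal{G}_i$, build a convex, $W_2$-compact invariant set from the compactness of $\boldsymbol{\Theta}$, the state constraints, and the coercivity bound, and conclude with a topological fixed-point theorem. The genuine difference is in the packaging. The paper applies Schauder's theorem to the combined operator $\mathcal{F}(\boldsymbol{\mu},\{\mathbf{u}_i\})=(\mathcal{T}[\{\mathbf{u}_i\}](\boldsymbol{\mu}),\{\mathcal{G}_i[\boldsymbol{\mu}](\mathbf{u}_i)\})$ on the product space $\mathcal{P}_2(\mathcal{S}\times\boldsymbol{\Theta})\times\prod_{i}L^2([0,T];\mathcal{U}_i)$, implicitly treating the best response as single-valued and continuous; you instead compose the two maps into $\Phi(\boldsymbol{\mu})=\mathcal{T}[\mathcal{G}[\boldsymbol{\mu}]](\boldsymbol{\mu})$ acting on measures alone and, recognizing that the discrete lane-change decision $\delta_i\in\{-1,0,1\}$ can make the best response discontinuous or multivalued, pass to relaxed controls and invoke Kakutani--Glicksberg--Fan. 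This is a more careful treatment of a point the paper's sketch glosses over, and it is the right repair; what the paper's route buys is brevity and a single-valued operator, what yours buys is an honest handling of the nonconvex control set. Two caveats on your version: first, closed graph and convex values of the relaxed best-response correspondence come from the relaxation itself together with Berge-type stability of the HJB problem under $W_2$-perturbations of $\boldsymbol{\mu}$ (via assumptions (1) and (3)); the monotonicity condition (2) is not what delivers these -- in both your write-up and the paper's sketch, monotonicity is really doing stability/uniqueness work rather than existence work. Second, the concluding ``standard purification'' step, recovering pure Borel-measurable $\delta_i$ from a relaxed equilibrium, is not automatic for finitely many players and would require additional structure (e.g.\ atomless idiosyncratic noise or convexity of the relevant cost sections), or else a restatement of the theorem in relaxed controls; as written it is asserted rather than proved, though this still leaves you at or above the level of rigor of the paper's own proof sketch.
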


\begin{proof}[Proof Sketch]
The proof employs a combination of Schauder's fixed point theorem and variational analysis. We construct a compact, convex subset $\mathcal{K} \subset \mathcal{P}_2(\mathcal{S} \times \boldsymbol{\Theta}) \times \prod_{i=1}^N L^2([0,T]; \mathcal{U}_i)$ and show that the combined operator $\mathcal{F}(\boldsymbol{\mu}, \{\mathbf{u}_i\}) = (\mathcal{T}[\{\mathbf{u}_i\}](\boldsymbol{\mu}), \{\mathcal{G}_i[\boldsymbol{\mu}](\mathbf{u}_i)\})$ maps $\mathcal{K}$ into itself. The Lipschitz and coercivity conditions ensure continuity and compactness, while the interaction monotonicity provides the necessary contraction properties in the Wasserstein metric.
\end{proof}

\begin{figure*}[t]
    \centering
    \includegraphics[width=1\textwidth]{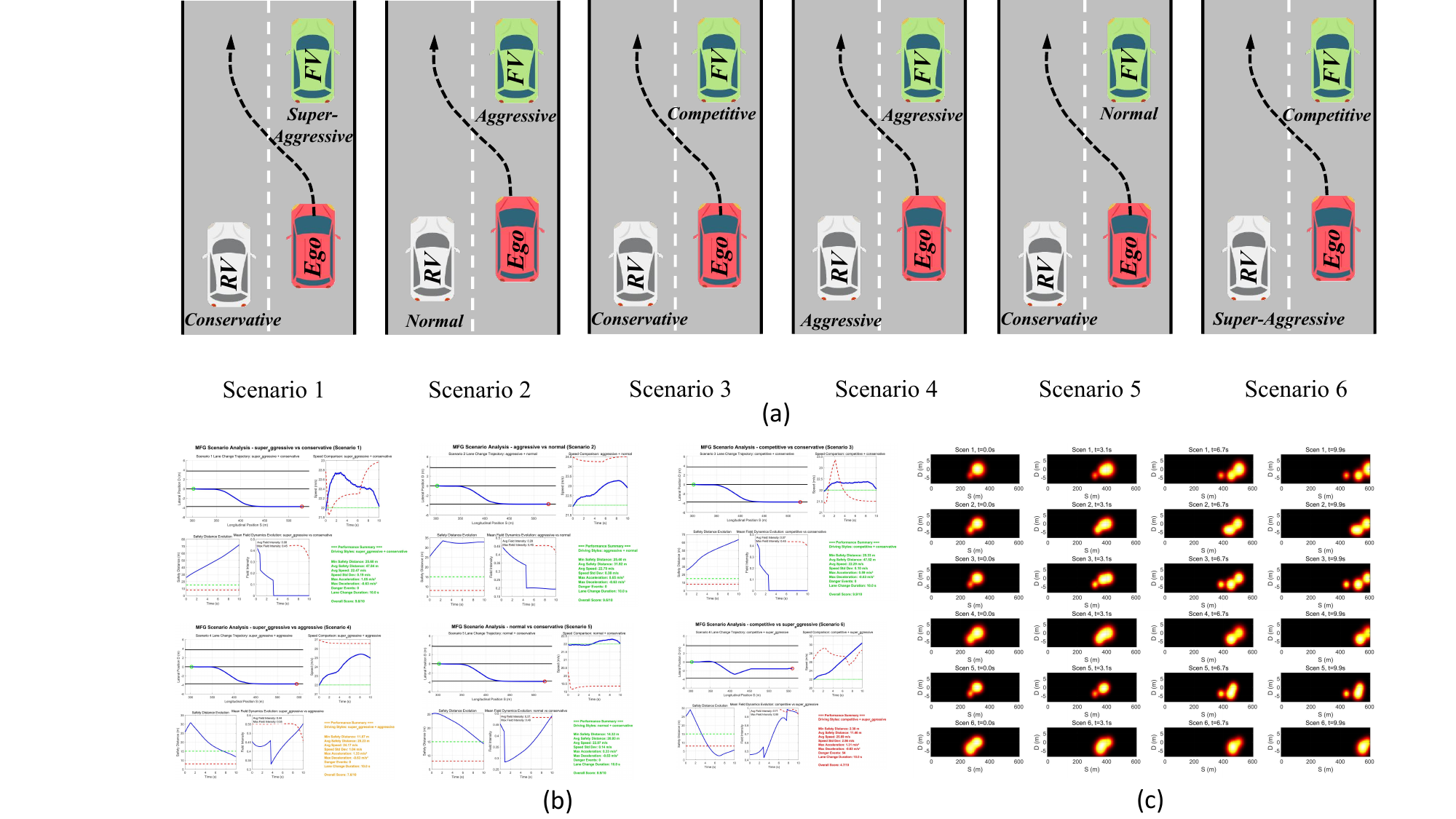}
    \caption{Mean Field Game analysis of lane change scenarios with different driving style combinations. (a) Schematic representation of six scenarios showing ego vehicle (red) performing lane changes while interacting with surrounding vehicles (green) of different driving styles. (b) Detailed performance analysis including trajectory plots, speed profiles, safety distances, and performance metrics for each scenario. (c) Mean field density evolution visualization showing spatial-temporal distribution patterns across four time points for each scenario.}
    \label{fig:mfg_scenarios}
\end{figure*}
\begin{theorem}[Uniqueness and Convergence Rate]
\label{thm:uniqueness_convergence}
Under the conditions of Theorem \ref{thm:existence} with the additional strong monotonicity condition:
\begin{equation}
\alpha_{\text{mono}} > \frac{L_f L_L}{2}
\end{equation}
where $L_f, L_L$ are the Lipschitz constants of $\boldsymbol{f}_i$ and $L_i$ respectively, the mean field equilibrium is unique. Moreover, the iterative algorithm:
\begin{equation}
\label{eq:iterative_algorithm}
\begin{aligned}
\boldsymbol{\mu}^{(k+1)} &= (1-\gamma)\boldsymbol{\mu}^{(k)} + \gamma \mathcal{T}[\{\mathbf{u}_i^{(k)}\}](\boldsymbol{\mu}^{(k)}) \\
\mathbf{u}_i^{(k+1)} &= (1-\gamma)\mathbf{u}_i^{(k)} + \gamma \mathcal{G}_i[\boldsymbol{\mu}^{(k+1)}](\mathbf{u}_i^{(k)})
\end{aligned}
\end{equation}
converges to the unique equilibrium with exponential rate:
\begin{equation}
\label{eq:convergence_rate}
W_2(\boldsymbol{\mu}^{(k)}, \boldsymbol{\mu}^*) + \sum_{i=1}^N \|\mathbf{u}_i^{(k)} - \mathbf{u}_i^*\|_{L^2} \leq \rho^k C_0
\end{equation}
where $\rho = (1-\gamma) + \gamma L_{\mathcal{T}} L_{\mathcal{G}} < 1$ for appropriately chosen $\gamma \in (0, 1)$, and $C_0$ depends on initial conditions.
\end{theorem}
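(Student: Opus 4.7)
The plan is to establish both uniqueness and convergence via a contraction-mapping argument in the product space $\mathcal{P}_2(\mathcal{S} \times \boldsymbol{\Theta}) \times \prod_{i=1}^N L^2([0,T]; \mathcal{U}_i)$, equipped with the norm $\|(\boldsymbol{\mu}, \{\mathbf{u}_i\})\|_{*} = W_2(\boldsymbol{\mu}, \boldsymbol{\mu}_0) + \sum_{i=1}^N \|\mathbf{u}_i\|_{L^2}$, building on the existence framework of Theorem \ref{thm:existence}. The key observation is that the relaxation parameter $\gamma$ regularizes the fixed-point iteration even when the composite Lipschitz constant is close to one, provided the strong monotonicity $\alpha_{\text{mono}} > L_f L_L / 2$ holds.

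First, I would establish Lipschitz estimates for each operator separately. For the measure-propagation operator $\mathcal{T}$, I would derive a $W_2$-stability bound for the Fokker--Planck equation \eqref{eq:fokker_planck_heterogeneous}: given two control profiles, a coupling argument on the associated characteristic flow together with the Lipschitz continuity of $\boldsymbol{f}_i$ and Gronwall's inequality yields $W_2(\mathcal{T}[\{\mathbf{u}_i^{(1)}\}](\boldsymbol{\mu}), \mathcal{T}[\{\mathbf{u}_i^{(2)}\}](\boldsymbol{\mu})) \leq L_{\mathcal{T}} \sum_i \|\mathbf{u}_i^{(1)} - \mathbf{u}_i^{(2)}\|_{L^2}$ with $L_{\mathcal{T}} = C(T)\, e^{L_f T}$. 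For the best-response operator $\mathcal{G}_i$, I would linearize the HJB system \eqref{eq:hjb_system} in the measure argument via the flat derivative $\delta V_i / \delta \boldsymbol{\mu}$, apply the duality between the HJB and the Fokker--Planck evolution, and use the coercivity of $L_i$ to extract $\|\mathcal{G}_i[\boldsymbol{\mu}_1](\mathbf{u}_i) - \mathcal{G}_i[\boldsymbol{\mu}_2](\mathbf{u}_i)\|_{L^2} \leq L_{\mathcal{G}} W_2(\boldsymbol{\mu}_1, \boldsymbol{\mu}_2)$, where $L_{\mathcal{G}}$ is controlled by $L_L$ and the reciprocal of the coercivity constant.

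Uniqueness would then follow from a standard MFG monotonicity argument. Suppose two equilibria $(\boldsymbol{\mu}_1^*, \{\mathbf{u}_{1,i}^*\})$ and $(\boldsymbol{\mu}_2^*, \{\mathbf{u}_{2,i}^*\})$. I would pair the difference of the HJB equations with $\boldsymbol{\mu}_1^* - \boldsymbol{\mu}_2^*$ in the Wasserstein inner product, apply the interaction monotonicity \eqref{eq:interaction_monotonicity}, and bound the cross-terms arising from $\boldsymbol{f}_i$ and $L_i$ by $(L_f L_L / 2)\, W_2^2(\boldsymbol{\mu}_1^*, \boldsymbol{\mu}_2^*)$. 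The strict inequality $\alpha_{\text{mono}} > L_f L_L / 2$ then forces $W_2(\boldsymbol{\mu}_1^*, \boldsymbol{\mu}_2^*) = 0$, after which uniqueness of the optimal control for each style $\boldsymbol{\theta}_i$ yields $\mathbf{u}_{1,i}^* = \mathbf{u}_{2,i}^*$. For the convergence rate, writing \eqref{eq:iterative_algorithm} abstractly as $\mathbf{z}^{(k+1)} = (1-\gamma)\mathbf{z}^{(k)} + \gamma\,\mathcal{F}(\mathbf{z}^{(k)})$ with $\mathcal{F} = (\mathcal{T}, \{\mathcal{G}_i\})$, the previous Lipschitz bounds compose to give $\|\mathcal{F}(\mathbf{z}_1) - \mathcal{F}(\mathbf{z}_2)\|_{*} \leq L_{\mathcal{T}} L_{\mathcal{G}}\, \|\mathbf{z}_1 - \mathbf{z}_2\|_{*}$. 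The triangle inequality applied to the averaged update yields $\|\mathbf{z}^{(k+1)} - \mathbf{z}^*\|_{*} \leq \rho\, \|\mathbf{z}^{(k)} - \mathbf{z}^*\|_{*}$ with $\rho = (1-\gamma) + \gamma L_{\mathcal{T}} L_{\mathcal{G}}$; choosing $\gamma$ small when $L_{\mathcal{T}} L_{\mathcal{G}} \geq 1$, and any $\gamma \in (0,1)$ otherwise, ensures $\rho < 1$, and Banach's principle delivers \eqref{eq:convergence_rate} with $C_0 = \|\mathbf{z}^{(0)} - \mathbf{z}^*\|_{*}$.

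The hard part, in my view, is obtaining a sharp and explicitly verifiable Lipschitz constant $L_{\mathcal{G}}$ for the best-response operator with respect to the mean-field argument. The HJB system \eqref{eq:hjb_system} depends on $\boldsymbol{\mu}$ both through the drift via \eqref{eq:mean_field_coupling} and through the safety cost $\ell_{\text{safety}}$, which couples the nonlinear risk kernel $\boldsymbol{\omega}$ with the style-compatibility matrix $\boldsymbol{\Psi}$ and the cross-style term $\boldsymbol{\Lambda}$ of \eqref{eq:cross_style_matrix}. Closing the argument requires stability of the flat derivative $\delta V_i / \delta \boldsymbol{\mu}$ uniformly over the compact style space $\boldsymbol{\Theta}$, which in turn demands a careful treatment of the singular boundary penalty $\varphi_{\text{boundary}}$ near active safety constraints; a localization-and-regularization scheme combined with the Lasry--Lions monotonicity of the safety coupling should suffice, but it is where all the technical effort concentrates.
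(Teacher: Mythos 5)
Your overall architecture (Lipschitz bounds for $\mathcal{T}$ and $\mathcal{G}_i$, contraction of the combined operator in the product space $\mathcal{P}_2(\mathcal{S}\times\boldsymbol{\Theta})\times\prod_i L^2$, Banach iteration for the relaxed scheme) is essentially the route the paper sketches, and your uniqueness step via Lasry--Lions-type pairing of the two equilibria against $\boldsymbol{\mu}_1^*-\boldsymbol{\mu}_2^*$ is a reasonable variant of the paper's contraction-based uniqueness. However, there is a concrete error in your convergence argument: you claim that when $L_{\mathcal{T}}L_{\mathcal{G}}\geq 1$ one can still obtain $\rho<1$ by ``choosing $\gamma$ small.'' This is false. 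With $\rho=(1-\gamma)+\gamma L_{\mathcal{T}}L_{\mathcal{G}}=1+\gamma\,(L_{\mathcal{T}}L_{\mathcal{G}}-1)$, every $\gamma\in(0,1)$ gives $\rho\geq 1$ whenever $L_{\mathcal{T}}L_{\mathcal{G}}\geq 1$; relaxation/averaging of a map that is merely Lipschitz (and possibly expansive) can never produce a contraction from the triangle inequality alone. Small $\gamma$ helps only if you additionally exploit monotonicity, e.g.\ via the standard estimate for strongly monotone, Lipschitz operators, which yields a factor of the form $\rho^2=1-2\gamma\alpha+\gamma^2L^2<1$ for small $\gamma$.

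This is exactly where the hypothesis $\alpha_{\text{mono}}>L_f L_L/2$ must enter the convergence proof, and in your write-up it does not: you use it only for uniqueness, and your contraction estimate for the iteration relies solely on $L_{\mathcal{T}}L_{\mathcal{G}}$. The paper's argument ties the strong monotonicity directly to the contraction of the combined operator in a suitably weighted product norm (equivalently, to controlling the spectral radius of the linearized fixed-point map at the equilibrium), so that either $L_{\mathcal{T}}L_{\mathcal{G}}<1$ is forced, or the relaxed iteration contracts despite $L_{\mathcal{T}}L_{\mathcal{G}}\geq 1$. To repair your proposal, propagate the monotonicity constant $\alpha_{\text{mono}}$ into the stability bounds for $\mathcal{T}$ and $\mathcal{G}_i$ (or into the norm in which you measure the iteration) and show that $\alpha_{\text{mono}}>L_f L_L/2$ implies the contraction factor claimed in \eqref{eq:convergence_rate}; as written, the final step of your convergence argument does not go through in the regime it is meant to cover. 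Your identification of the stability of $\delta V_i/\delta\boldsymbol{\mu}$ and the singular boundary penalty as the technical bottleneck for $L_{\mathcal{G}}$ is apt and consistent with what a full proof would require.
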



\section{Simulation Results}
\label{sec4}

The simulations were designed to verify the safety, stability and efficiency of the proposed method. The simulations were conducted on a computer with the Ubuntu 18.04.6 LTS OS, a 12th generation 16-thread Intel\textsuperscript{\textregistered}Core\texttrademark\ i5-12600KF CPU, an NVIDIA GeForce RTX 3070Ti GPU, and 16GB of RAM. The simulation results are obtained in MATLAB R2024b. 

To verify the effectiveness of the proposed ERPF-MPC, we designed several experimental scenarios. To show that the proposed MFG better assists the AV in safe driving under different driving-style combinations, we compared lane-change performance across six combinations of front-vehicle (FV, current lane) and rear-vehicle (RV, adjacent lane) styles. Additionally, we evaluated generalization using a lane-changing scenario based on the Next Generation Simulation (NGSIM) dataset to approximate real-world conditions. Finally, to highlight the superiority of ERPF in multi-driving-style interactive driving, we conducted lane-change simulations with 18 surrounding vehicles, each assigned its own driving style.
\begin{figure*}[t]
    \centering
    \includegraphics[width=0.8\textwidth]{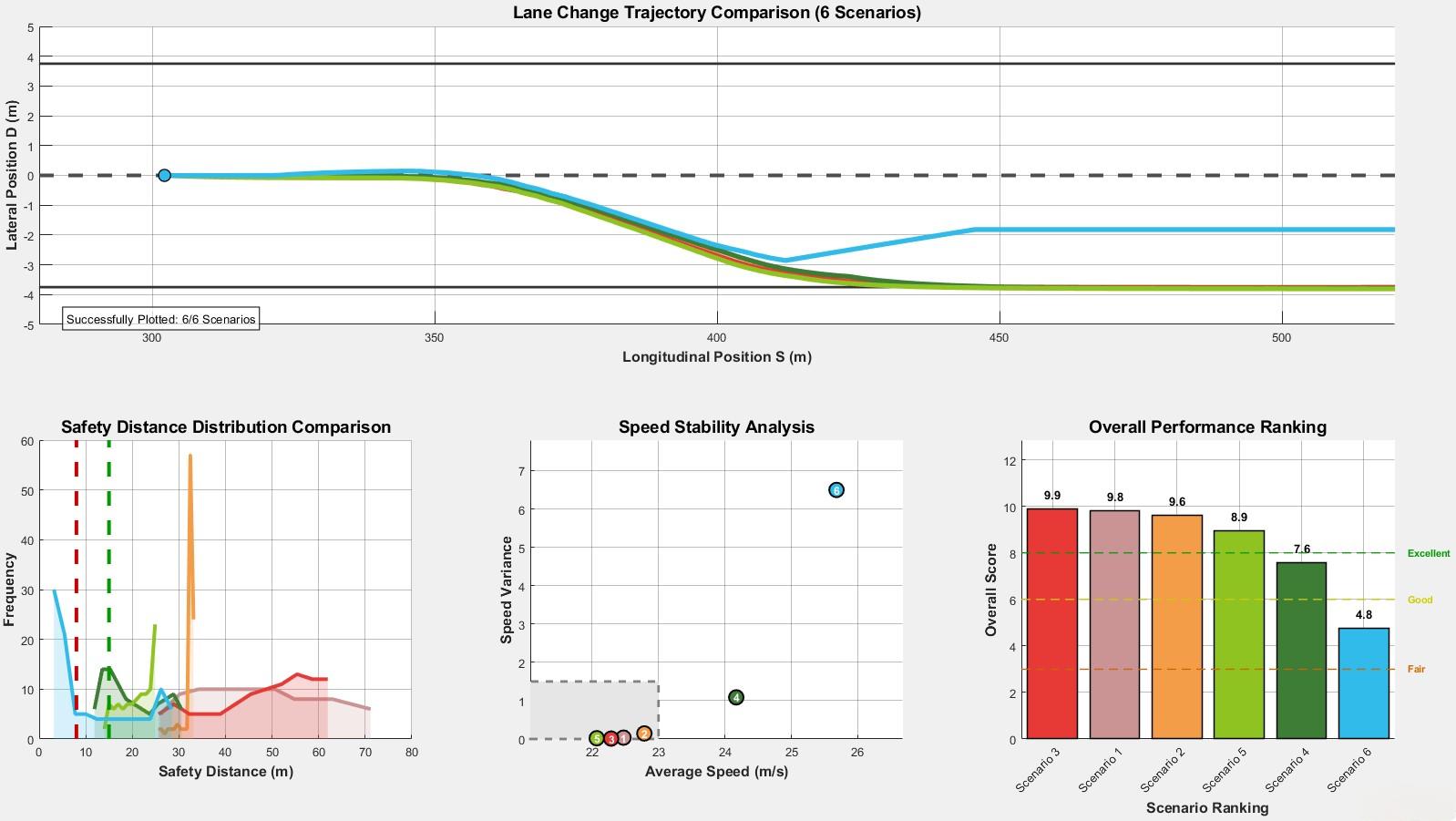}
    \caption{Comparative analysis of lane change performance across six driving style scenarios.}
    \label{fig:trajectory_comparison}
\end{figure*}

Figure~\ref{fig:mfg_scenarios} presents a comprehensive analysis of six different lane change scenarios in a MFG framework, examining the interaction between autonomous vehicles with varying driving styles during lane change maneuvers.

The experimental setup systematically evaluates six distinct combinations of driving styles: (1) Super-Aggressive vs Conservative, (2) Aggressive vs Normal, (3) Competitive vs Conservative, (4) Aggressive vs Aggressive, (5) Normal vs Conservative, and (6) Competitive vs Super-Aggressive. Each scenario demonstrates unique behavioral patterns and safety characteristics.

The results reveal significant variations in lane change performance across different driving style combinations. Panel (b) provides comprehensive performance metrics including trajectory plots showing the lateral displacement patterns, speed evolution curves indicating velocity adjustments during maneuvers, and safety distance monitoring throughout the lane change process. Scenarios involving aggressive drivers typically exhibit higher speed variations, more abrupt trajectory corrections, and reduced minimum safety distances, while conservative combinations produce smoother velocity profiles, gradual lateral movements, and consistently larger safety margins. The performance summary scores reflect these behavioral differences, with conservative scenarios achieving higher overall safety ratings.

Panel (c) presents the mean field density evolution across four temporal snapshots for each scenario, revealing distinct spatial-temporal patterns. The density fields exhibit varying concentration levels and distribution shapes depending on the driving style combinations. Aggressive scenarios demonstrate more localized, high-intensity density peaks that shift rapidly across the spatial domain, indicating dynamic vehicle clustering and frequent positional changes. Conservative scenarios show more diffuse, stable density distributions with gradual spatial transitions, reflecting predictable movement patterns and maintained inter-vehicle spacing. These density visualizations effectively capture the collective behavioral dynamics inherent in each driving style combination.

Figure~\ref{fig:trajectory_comparison} presents a comprehensive comparative analysis of lane change trajectories and performance metrics across six distinct driving style combinations, demonstrating the effectiveness of the Mean Field Game framework in capturing behavioral variations in autonomous vehicle interactions.

The trajectory comparison in the upper panel demonstrates a crucial finding: all six scenarios successfully completed collision-free lane change maneuvers from the middle lane (0 m) to the target lane (-3.75 m), validating the robustness of the Mean Field Game control framework across diverse driving style combinations. This achievement is particularly significant given the challenging nature of mixed-behavior traffic scenarios. Despite varying behavioral parameters, no collisions occurred in any scenario, indicating effective safety constraint enforcement within the MFG framework. The trajectories reveal distinct patterns while maintaining safety, with conservative driving styles exhibiting smoother, more gradual lateral transitions, while aggressive combinations demonstrate sharper trajectory changes with steeper descent rates, yet all remaining within safe operational bounds.

The safety distance distribution analysis in the lower left panel provides critical validation of the collision-free performance across all scenarios. While the frequency distributions show marked differences between scenarios, the fundamental safety integrity remains intact throughout all combinations. Conservative combinations (Scenarios 1, 3, and 5) maintain higher average safety distances with comfortable margins in the 15-40 meter range, while aggressive scenarios, despite operating with reduced safety margins and notable peaks in the 5-15 meter range, never breach the critical collision threshold. Importantly, even the most challenging scenarios maintain minimum distances well above zero, with the danger threshold at 8 meters serving as an effective safety boundary that prevents any collision events across all driving style interactions.

The speed stability analysis demonstrates the relationship between average velocity and speed variance, with the green ideal area representing optimal performance zones. Conservative scenarios cluster near the ideal region with lower variance and speeds closer to the target 22-23 m/s, while aggressive combinations show increased variance and higher average speeds, reflecting more dynamic velocity adjustments during lane changes. The overall performance ranking confirms the successful collision-free operation across all scenarios, with Scenario 3 (competitive+conservative) achieving the highest score of 9.9, followed closely by Scenarios 1 and 2. Notably, even the most aggressive combination (Scenario 6) maintains a score of 4.8 while achieving zero collisions, demonstrating that the MFG framework successfully balances assertive maneuvering with fundamental safety requirements. 

\subsection{NGSIM Real Data Integration Experiment}

This subsection introduces an enhanced Mean Field Game implementation that integrates real-world traffic data from the NGSIM (Next Generation Simulation) dataset to validate the framework's performance under authentic driving conditions. The experiment replaces synthetic surrounding vehicle behaviors with actual car-following data extracted from California highway traffic recordings, providing a more realistic testing environment for autonomous vehicle lane change scenarios.

\subsubsection{Experimental Setup and Data Preparation}
The NGSIM dataset provides rich behavioral information including longitudinal positions, velocities, accelerations, inter-vehicle spacing, relative velocities, time headways, and vehicle identification data. Each trajectory segment represents a continuous car-following sequence captured at ten-hertz frequency, delivering high-resolution temporal dynamics that reflect authentic human driving patterns observed on California highways.

\subsubsection{Coordinate System Transformation and Vehicle Positioning}
\begin{figure*}[t]
    \centering
    \includegraphics[width=\textwidth]{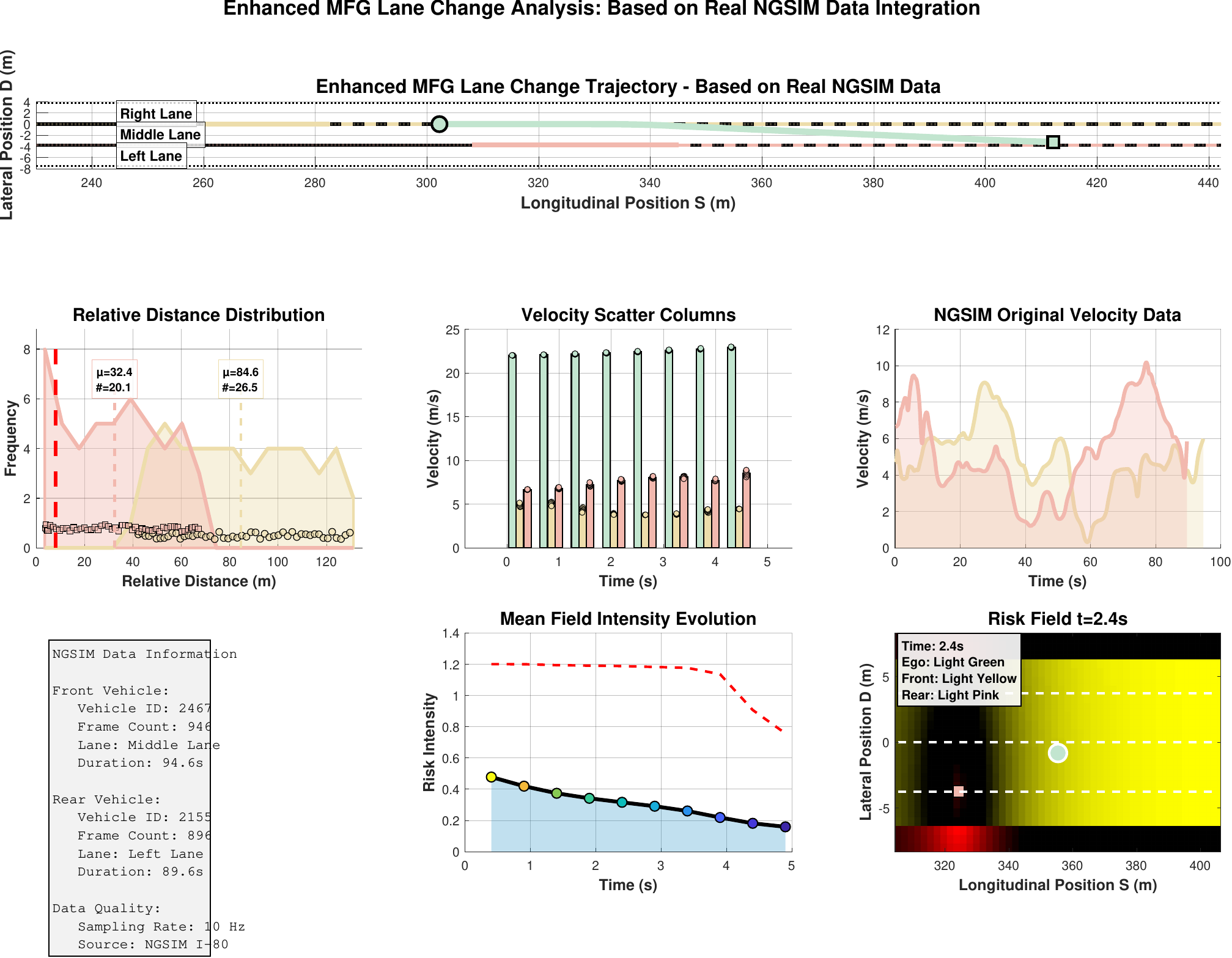}
    \caption{Comprehensive results analysis of real-world-data showing (top) trajectory evolution with real traffic data integration, (middle) safety metrics including relative distance and velocity distributions, and NGSIM velocity statistics.}
    \label{fig:ngsim_results}
\end{figure*}
\begin{figure*}[t]
  \centering
  \includegraphics[width=\textwidth]{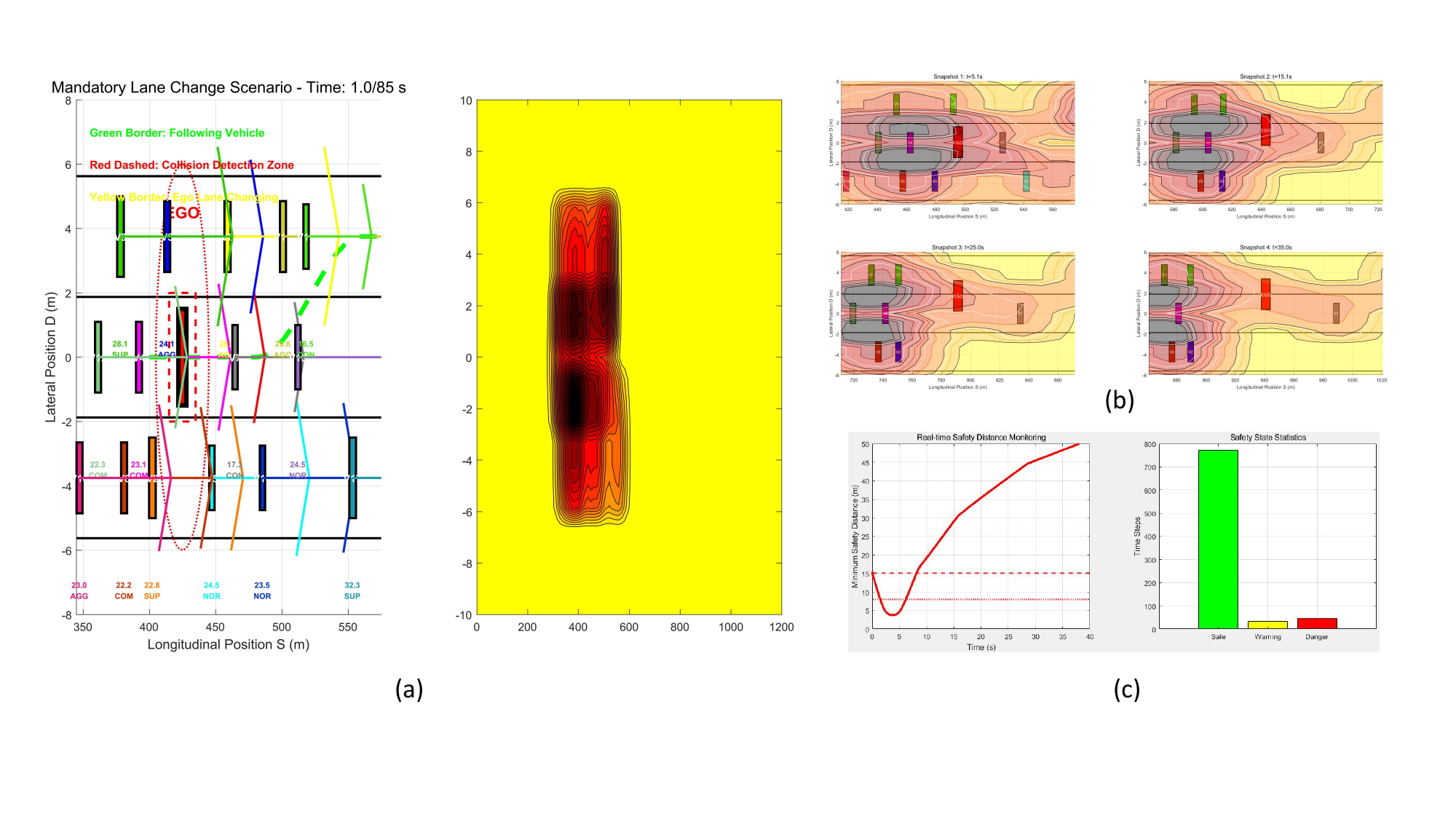}
  \caption{MFG planner under a hazardous configuration of Scenario 7. (a) Middle lane$\rightarrow$upper lane-change with tight headways. The mean-field density around the ego is highly concentrated at the start. (b) Four time-ordered snapshots where MFG coupling shapes a lower-density corridor in the target lane. (c) Safety metrics: the minimum inter-vehicle distance briefly approaches the danger threshold, then rises above the safety threshold and continues to increase.}
  \label{fig:results}
\end{figure*}
\begin{figure*}[t]
  \centering
  \includegraphics[width=1\textwidth]{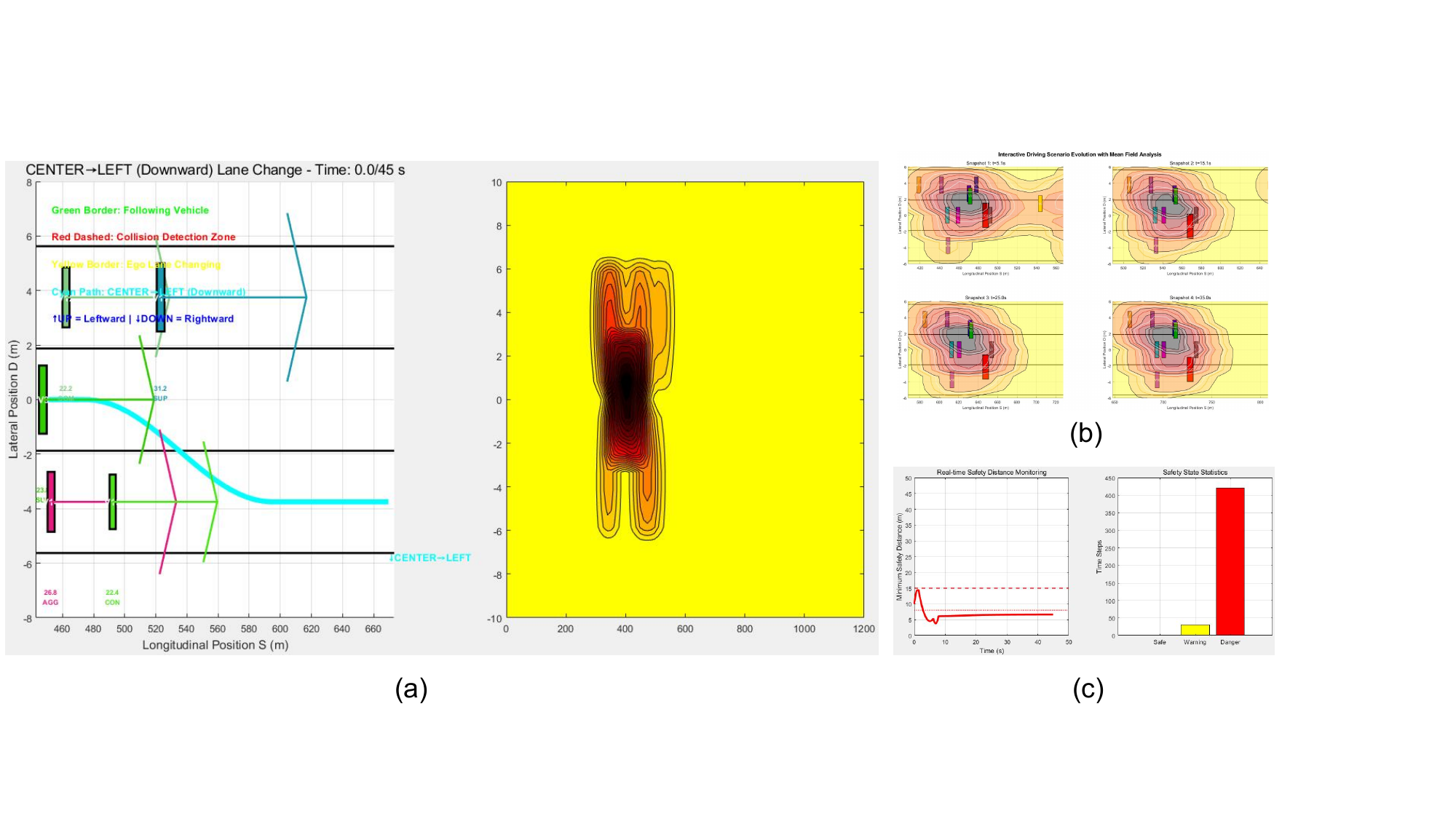}
  \caption{MFG planner under a hazardous configuration of Scenario 8. (a) Middle lane$\rightarrow$bottom lane-change with tight headways. (b) Four snapshots show how MFG interactions gradually reshape density and open a feasible corridor in the target lane. (c) Safety metrics: the minimum inter-vehicle distance dips into the danger regime and approaches the 8\,m threshold, yet remains strictly positive; the state histogram records many danger steps but no collisions.}
  \label{fig:mfg_danger}
\end{figure*}
In this simulation, we utilize real-world vehicle trajectory data from the NGSIM I-80 dataset to enhance the realism and validity of our MFG lane change analysis. The NGSIM dataset provides high-fidelity vehicular motion data collected from actual traffic conditions on Interstate 80 in Emeryville, California, with a sampling frequency of 10 Hz.

Specifically, we extract two representative car-following segments from the dataset to model the surrounding vehicles in our lane change scenario. The front vehicle (Vehicle ID: 2467) operates in the middle lane with 946 frames of trajectory data spanning approximately 94.6 seconds of driving behavior. The rear vehicle (Vehicle ID: 2155) is positioned in the left lane (target lane for ego vehicle's lane change maneuver) with 896 frames covering 89.6 seconds of naturalistic driving patterns.

The integration of authentic NGSIM data serves multiple purposes in our MFG framework: (1) it provides realistic velocity profiles and acceleration patterns that reflect actual human driving behavior, (2) it ensures that the surrounding vehicles exhibit naturalistic responses during the ego vehicle's lane change maneuver, and (3) it enables validation of our MFG-based decision-making against real-world traffic dynamics. The processed NGSIM trajectories are spatially and temporally aligned with our simulation environment while preserving the essential characteristics of human driving behavior, thereby creating a hybrid simulation environment that combines theoretical MFG modeling with empirical traffic data.

Figure~\ref{fig:ngsim_results} presents demonstrate the effectiveness of integrating real NGSIM data with theoretical mean field game modeling. The trajectory visualization reveals a smooth and safe lane change maneuver from the middle lane to the left lane, with the ego vehicle (light green) successfully navigating between the front vehicle (light yellow) and rear vehicle (light pink) while maintaining appropriate safety margins.

The relative distance distribution analysis provides crucial insights into the safety characteristics of the lane change maneuver, where $\mu$ represents the mean distance and $\#$ denotes the standard deviation. Two distinct distribution peaks are observed: the front vehicle distance distribution exhibits $\mu = 32.4$ meters with $\# = 20.1$ meters, indicating a relatively close but safe following distance with moderate variability. In contrast, the rear vehicle distance distribution shows $\mu = 84.6$ meters with $\# = 26.5$ meters, reflecting a more conservative gap maintenance strategy with slightly higher variance. Both distributions remain well above the critical 8-meter safety threshold, confirming the safety-conscious nature of the MFG-guided lane change decision.

The velocity scatter column analysis reveals consistent speed profiles across different time segments, with the ego vehicle maintaining speeds between 20-23 m/s while effectively coordinating with surrounding traffic. The NGSIM original velocity data demonstrates naturalistic driving patterns with realistic speed variations, ranging from 2-10 m/s, which validates the authenticity of the surrounding vehicle behaviors. The integration of these real-world velocity profiles ensures that the MFG framework operates under realistic traffic conditions, thereby enhancing the practical applicability of the proposed lane change algorithm.

\begin{table}[t]
\centering
\captionsetup{labelfont={sc}, textfont={sc}, labelsep=newline, justification=centering}
\renewcommand{\arraystretch}{1.1}
\caption{Complete Vehicle Configuration for Scenario 7: Middle-Upper Lane-changing during Multi-Vehicle Interaction}
\label{tab:vehicle_config}
\begin{threeparttable}
\setlength{\tabcolsep}{3.8pt}
\resizebox{\linewidth}{!}{%
\begin{tabular}{c|c|c|c|c|c}
\hline \hline
\multirow{2}{*}{Vehicle ID} & \multirow{2}{*}{Type} & \multirow{2}{*}{Driving Style} & \multirow{2}{*}{Lane} & Relative Position & Initial Speed \\
& & & & (m) & (m/s) \\
\hline
1  & Ego Vehicle & Autonomous       & Middle & 0     & 25.0 \\
\hline
2  & Vehicle 2   & Super Aggressive & Left   & -25   & 36.5 \\
\hline
3  & Vehicle 3   & Aggressive       & Right  & -15   & 32.4 \\
\hline
4  & Vehicle 4   & Competitive      & Middle & -35   & 27.4 \\
\hline
5  & Vehicle 5   & Normal           & Left   & +20   & 22.3 \\
\hline
6  & Vehicle 6   & Aggressive       & Right  & +30   & 32.5 \\
\hline
7  & Vehicle 7   & Conservative     & Middle & +45   & 15.2 \\
\hline
8  & Vehicle 8   & Competitive      & Left   & -45   & 29.7 \\
\hline
9  & Vehicle 9   & Super Aggressive & Right  & -55   & 33.9 \\
\hline
10 & Vehicle 10  & Normal           & Left   & +60   & 25.0 \\
\hline
11 & Vehicle 11  & Aggressive       & Right  & +70   & 33.5 \\
\hline
12 & Vehicle 12  & Competitive      & Middle & -65   & 30.9 \\
\hline
13 & Vehicle 13  & Normal           & Middle & +85   & 25.9 \\
\hline
14 & Vehicle 14  & Aggressive       & Left   & -80   & 30.8 \\
\hline
15 & Vehicle 15  & Conservative     & Right  & +100  & 17.4 \\
\hline
16 & Vehicle 16  & Competitive      & Right  & -90   & 27.3 \\
\hline
17 & Vehicle 17  & Super Aggressive & Left   & +120  & 34.6 \\
\hline
18 & Vehicle 18  & Normal           & Middle & -100  & 22.3 \\
\hline \hline
\end{tabular}%
}
\end{threeparttable}
\end{table}
\subsection{Multi-Vehicle Interaction Validation}

This subsection presents a comprehensive validation experiment designed to evaluate the Mean Field Game framework's performance in complex multi-vehicle environments. The experiment establishes a challenging scenario involving an autonomous ego vehicle navigating through dense traffic consisting of seventeen surrounding vehicles with diverse driving behaviors, creating a realistic and demanding test environment for autonomous vehicle control systems.

\subsubsection{Experimental Design and Vehicle Configuration}
\begin{table}[t]
\centering
\captionsetup{labelfont={sc}, textfont={sc}, labelsep=newline, justification=centering}
\renewcommand{\arraystretch}{1.1}
\caption{Complete Vehicle Configuration for Scenario 8: Middle-Bottom Lane-changing during Multi-Vehicle Interaction}
\label{tab:vehicle_config_full}
\begin{threeparttable}
\setlength{\tabcolsep}{4pt}
\resizebox{\linewidth}{!}{%
\begin{tabular}{c|c|c|c|c|c}
\hline \hline
\multirow{2}{*}{Vehicle ID} & \multirow{2}{*}{Type} & \multirow{2}{*}{Driving Style} & \multirow{2}{*}{Lane} & Relative Position & Initial Speed \\
& & & & (m) & (m/s) \\
\hline
1  & Ego Vehicle  & Autonomous       & Middle &   0   & 25.0 \\
\hline
2  & Vehicle 2    & Super Aggressive & Middle &  -20  & 24.0 \\
\hline
3  & Vehicle 3    & Aggressive       & Middle &  -10  & 20.8 \\
\hline
4  & Vehicle 4    & Competitive      & Middle &  -30  & 20.8 \\
\hline
5  & Vehicle 5    & Normal           & Middle &  +40  & 18.4 \\
\hline
6  & Vehicle 6    & Aggressive       & Middle &  +15  & 20.8 \\
\hline
7  & Vehicle 7    & Conservative     & Middle &  +25  & 14.4 \\
\hline
8  & Vehicle 8    & Competitive      & Middle &  +35  & 20.8 \\
\hline
9  & Vehicle 9    & Super Aggressive & Middle &  +45  & 24.0 \\
\hline
10 & Vehicle 10   & Normal           & Right  &  -35  & 21.9 \\
\hline
11 & Vehicle 11   & Aggressive       & Right  &  +30  & 24.7 \\
\hline
12 & Vehicle 12   & Competitive      & Right  &  +60  & 24.7 \\
\hline
13 & Vehicle 13   & Normal           & Right  &  -55  & 21.9 \\
\hline
14 & Vehicle 14   & Aggressive       & Left   &  +50  & 29.9 \\
\hline
15 & Vehicle 15   & Conservative     & Left   &  +90  & 20.7 \\
\hline
16 & Vehicle 16   & Competitive      & Left   &  -60  & 29.9 \\
\hline
17 & Vehicle 17   & Super Aggressive & Right  & +120  & 28.5 \\
\hline
18 & Vehicle 18   & Normal           & Right  &  -80  & 21.9 \\
\hline \hline
\end{tabular}%
}
\end{threeparttable}
\end{table}

The multi-vehicle validation experiment creates a densely populated traffic scenario on a three-lane highway section extending 1500 meters in length with standard lane widths of 3.75 meters. The ego vehicle, representing an autonomous vehicle equipped with the Mean Field Game control algorithm, is positioned at the center of this traffic configuration to maximize interaction complexity and provide comprehensive validation of the system's capabilities under realistic conditions.

The surrounding traffic environment consists of seventeen vehicles strategically distributed across all three lanes, with varying longitudinal positions ranging from 100 meters behind to 120 meters ahead of the ego vehicle. This configuration ensures continuous interaction throughout the simulation duration, creating conditions that closely mirror real highway traffic density and complexity. The vehicles are positioned to create both cooperative and competitive scenarios, testing the ego vehicle's ability to navigate through varying traffic patterns while maintaining safety and efficiency.

Table~\ref{tab:vehicle_config} presents the complete vehicle configuration, including driving styles, initial positions, lane assignments, and speed characteristics for all eighteen vehicles in the simulation. The configuration demonstrates a balanced distribution across vehicle types and positions, ensuring comprehensive testing of the Mean Field Game framework under diverse interaction scenarios.


Figure~\ref{fig:results} summarizes the MFG-based planner in scenario 7. Panel (a) shows the ego vehicle navigating among mixed driving styles while the mean-field density concentrates near the ego at the start and then redistributes. Panel (b) illustrates how MFG interactions shape a lower-density corridor in the target lane, producing a usable gap for a smooth merge. Panel (c) reports safety: the minimum distance briefly nears 8\,m, then exceeds 15\,m within the first seconds and keeps increasing; most steps are safe, with short warning periods and few danger events.

In the scenario 8, the ego vehicle is at higher risk because several surrounding vehicles start with shorter longitudinal gaps, which reduces the initial headway at \(t=0\). Consequently, the minimum separation can satisfy
\(d_{\min}(0) \approx d_{\text{danger}} (=8\,\mathrm{m}) < d_{\text{safe}} (=15\,\mathrm{m})\),
and the initial time-to-collision \(\mathrm{TTC}(0)\) becomes small. Being in the center lane also limits lateral escape space, while heterogeneous (aggressive/competitive) behaviors increase uncertainty. The MFG planner must first disperse local density to restore a feasible margin before executing a stable lane change.

Figure~\ref{fig:mfg_danger} evaluates the MFG planner in scenario 8 with short initial gaps and heterogeneous behaviors. In panel (a), the ego executes a center$\rightarrow$left lane change while the density field forms a compact plume around it, limiting early escape options. Panel (b) shows that the MFG coupling steers surrounding flow, creating a lower-density corridor that supports a controlled merge. Panel (c) confirms collision-free operation: although the minimum separation enters the danger band and nears the 8\,m threshold, we observe \(d_{\min}(t)>0\) and positive time-to-collision throughout the 45\,s horizon. Thus, even under persistent hazard exposure, the planner maintains nonzero clearance and completes the maneuver without collisions.

\section{Conclusion}  
This work proposed a heterogeneous MFG based planning framework for 
interactive autonomous driving. By formulating multi-agent dynamics with style-aware 
parametrization, heterogeneous interactions were compactly represented through a mean 
field approximation, where the aggregated intensity field provides a unified potential 
for behavior prediction and risk assessment. The forward--backward MFG solution yields 
an $\varepsilon$-Nash equilibrium policy, which is subsequently integrated into a 
safety-critical path planner that balances field consistency, safety margins, dynamic 
feasibility, and style adherence. Numerical evaluations and visualizations of the mean 
field intensity confirm the interpretability of the proposed modelling, while the 
integration with path planning demonstrates its capability to achieve adaptive, robust, 
and safe decision-making under complex traffic interactions.

\bibliographystyle{IEEEtran}
\bibliography{IEEEabrv,zq_lib}

\end{document}